\newtheorem{theorem}{Theorem}
\newtheorem{lemma}{Lemma}
\newtheorem{remark}{Remark}
\renewenvironment{proof}[1][\proofname]{\par
  \vspace{-\topsep}
  \pushQED{\qed}%
  \normalfont
  \topsep0pt \partopsep0pt 
  \trivlist
  \item[\hskip\labelsep
        \itshape
    #1\@addpunct{.}]\ignorespaces
}{%
  \popQED\endtrivlist\@endpefalse
  \addvspace{6pt} 
}
\title{Global optimization of graph acquisition functions for neural architecture search
}
\author{
    Yilin Xie,~~Shiqiang Zhang,~~Jixiang Qing,~~Ruth Misener,~~Calvin Tsay$^\star$ \\
    Department of Computing, Imperial College London \\
    London, United Kingdom
}
\begin{document}
\maketitle

\def\thefootnote{$\star$}\footnotetext{Corresponding author: \texttt{c.tsay@imperial.ac.uk}}

\begin{abstract}
    Graph Bayesian optimization (BO) has shown potential as a powerful and data-efficient tool for neural architecture search (NAS). Most existing graph BO works focus on developing graph surrogates models, i.e., metrics of networks and/or different kernels to quantify the similarity between networks. However, the acquisition optimization, as a discrete optimization task over graph structures, is not well studied due to the complexity of formulating the graph search space and acquisition functions. This paper presents explicit optimization formulations for graph input space including properties such as reachability and shortest paths, which are used later to formulate graph kernels and the acquisition function. We theoretically prove that the proposed encoding is an equivalent representation of the graph space and provide restrictions for the NAS domain with either node or edge labels. Numerical results over several NAS benchmarks show that our method efficiently finds the optimal architecture for most cases, highlighting its efficacy.
\end{abstract}

\section{Introduction}\label{sec:introduction}

Despite numerous breakthroughs in deep learning, the design of neural architectures largely relies on prior experience and heuristic search. 
Moreover, the neural architecture design underlies the learned representation of the data, and the ultimate downstream performance in predictive tasks. 
The field of neural architecture search (NAS) seeks to automate this key step, by letting algorithms automatically design the architecture of a neural network model~\citep{ren2021comprehensive}. 
In general, NAS algorithms share several steps~\citep{salmani2025systematic}: (i) encoding the search space, e.g., as a general or modular domain, (ii) prescribing a search strategy over the above space, and (iii) assessing the (approximate) performance at selected points. Early works in NAS sought to encode a general search space from scratch, e.g., as a string~\citep{zoph2017neural}. 
Later works constrain the search space toward problem tractability, such as by explicitly encoding a layer- or module-based structure~\citep{liu2018progressive, wu2019fbnet}. Search strategies are often based on random search, gradient-based optimization~\citep{liu2019darts,wu2019fbnet}, Bayesian optimization~\citep{white2021bananas, ru2021interpretable}, evolutionary algorithms~\citep{real2019evolution, qiu2023selfevol}, or reinforcement learning~\citep{zoph2017neural, jaafra2019rlreview, cheng2022dpnas}. 
Finally, performance assessments are the most expensive step of NAS, often involving full or partial training of the proposed model(s).

Graph Bayesian optimization (BO) exhibits state-of-the-art performance in  NAS \citep{elsken2019NASsurvey, white2023NASthousand}, given the ability of algorithms to efficiently explore the graph search space and identify promising architectures within limited budgets \citep{ru2021interpretable}. Graph BO addresses the above NAS steps using (i) a graph surrogate that is trained over available data and then serves as a predictor, and (ii) an acquisition function, encoding trade-offs between exploitation and exploration, that is optimized to propose the next candidate. From modeling perspectives, Gaussian processes (GPs) \citep{schulz2018GP} are commonly used since they offer accurate prediction along with uncertainty quantification. To apply GPs in a graph domain, graph kernels \citep{vishwanathan2010graph,borgwardt2020graph, kriege2020survey,nikolentzos2021graph} are introduced to measure the similarity between graphs. Although advances in graph kernels facilitate the generalization from non-structural spaces to graph space, optimizing acquisition functions over (combinatorial) graph spaces remains a challenge. Most works use sample-based or evolutionary algorithms, since they only require evaluations of the acquisition function and can thus be easily applied to various graph domains. However, these algorithms must incorporate problem-specific constraints into the sampling and mutation steps to remove invalid candidates, and there is no theoretical guarantee about the optimality of solutions obtained from these methods.

Recently, the idea of using mathematical programming techniques to formulate machine learning (ML) models, e.g., neural networks (NNs) \citep{fischetti2018deep,anderson2020strong,tsay2021partition,zhang2023optimizing}, trees \citep{misic2020optimization,mistry2021mixed,ammari2023linear}, and GPs \citep{schweidtmann2021deterministic,xie2024global}, has attracted a lot of attention, since it provides a way to explicitly solve decision-making problems involving ML models. Relevant applications include BO acquisition optimization \cite{thebelt2021entmoot,thebelt2022maximizing, wang2023optimizing}, NN verification \citep{huchette2023deep, hojny2024verifying}, molecular design \citep{zhang2024augmenting,mcdonald2024mixed}, among others. Based on the global optimization formulation for acquisition optimization proposed in \citep{xie2024global}, \citet{xie2025bogrape} propose BoGrape as a general graph BO framework, comprising the first work to treat graph acquisition functions from a discrete optimization viewpoint. By encoding graph spaces and shortest-path graph kernels \citep{borgwardt2005shortest} into mixed-integer programming (MIP), BoGrape can handle constraints over graph search spaces and globally optimize the acquisition function with theoretical guarantees. However, the requirement of strong connectivity makes BoGrape unsuitable for NAS, since neural architectures are weakly connected acyclic digraphs (DAGs). 

\begin{figure}
    \centering
    \includegraphics[width=\linewidth]{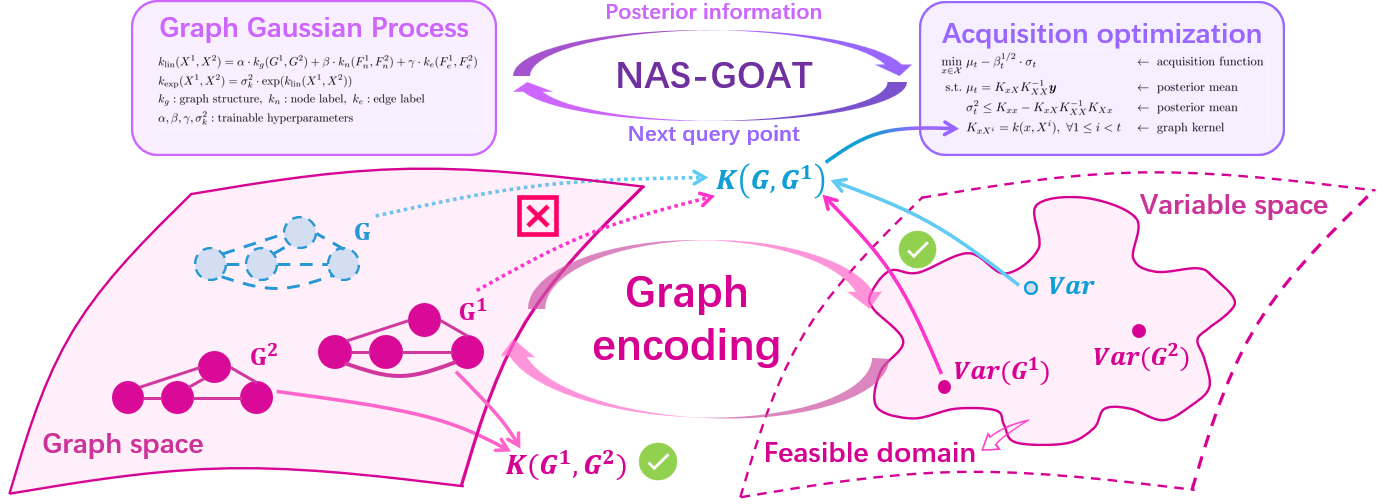}
    \caption{Illustration of NAS-GOAT. The main idea is to represent graphs in variable space and introduce constraints to build a bijection between all graphs and the feasible domain. The graph kernel value between an unknown graph (which is our optimization target) and a given graph is then formulated as expressions of variables, or constraints, enabling us to employ global optimization for acquisition function and propose the next neural architecture to evaluate. }
    \label{fig:illustration}
\end{figure}

This paper studies the global optimization of graph acquisition functions for graph BO-based NAS. To represent the graph space containing valid neural architectures, we generalize the graph encoding presented in \citet{xie2025bogrape} to omit assumptions about connectivity, and we show how the resulting general encoding can be restricted to the NAS search space. GPs with shortest-path kernels are used as graph surrogates, and lower confidence bound (LCB) \citep{srinivas2010gaussian} is chosen as the acquisition function. The proposed graph encoding contains graph properties including reachability and shortest paths and is therefore compatible with existing formulations for shortest-path graph kernels and acquisition functions \citep{xie2025bogrape}. The final acquisition optimization is formulated as a MIP, which can be solved by MIP solvers with global optimality guarantees. Figure~\ref{fig:illustration} illustrates the main idea of the proposed framework, we also list the major contributions of this work are as follows:
\begin{itemize}
    \item We present an equivalent representation for general labeled graphs in variable space. Each graph corresponds to a unique feasible solution containing its graph structure, as well as graph properties including reachability, shortest distances, and shortest paths.
    \item We provide a general kernel form measuring the similarity between two labeled graphs over graph structure, node label, and edge label levels, and we present a formulation that is compatible with our graph encoding.
    \item We incorporate NAS-specific constraints to the graph encoding, which define the valid search space for the NAS task using either node labels or edge labels. 
    \item We propose NAS-GOAT to \textbf{g}lobally \textbf{o}ptimize graph \textbf{a}cquisi\textbf{t}ion functions based on our proposed encoding. Numerical results demonstrating a full BO loop on NAS benchmarks show the efficiency and potential of NAS-GOAT.  
\end{itemize}

\textbf{Paper structure:} Section \ref{sec:background} provides preliminary knowledge. Section \ref{sec:methodology} introduces our methodology and theoretical results. Section  \ref{sec:experiments} reports the experimental results. Section \ref{sec:conclusion} concludes this work and discusses the future work. Appendices contain all proofs and further details.

\section{Background}\label{sec:background}
\subsection{Cell-based NAS}\label{subsec:NAS}
In many NAS search spaces, a network architecture is designed by varying some repeated small feedforward sub-structures called cells \citep{ying2019bench, dong2020bench}. Each cell is treated as a DAG, where the operation units are represented as node or edge labels, and information flows within the cell following graph topologies. Cells are then stacked multiple times and embedded into a macro neural network `skeleton' to give the final architecture. For instance, NAS-Bench-101 \citep{ying2019bench} and NAS-Bench-201 \citep{dong2020bench} define one stack as $3$ and $5$ replications of cells, resp., and each stack appears $3$ times in the overall network structure. Cell-based NAS can be naturally considered as an expensive black-box optimization problem, where one aims to search for the best graph, i.e., cell, that optimizes the performance of the resulting neural architecture over certain metrics, e.g., validation/test accuracy.

\subsection{Graph Bayesian optimization}\label{subsec:graph_BO}
Graph BO is a natural extension of BO \citep{frazier2018BO, garnett2023BObook} from vector space to graph space. At the $t$-th iteration, a graph Gaussian process (GP) equipped with a graph kernel is trained on available data $X=\{(G^i,F^i), y^i\}_{i=1}^{t-1}$. The posterior mean $\mu_t(\cdot)$ and variance $\sigma_t^2(\cdot)$ obtained from the graph GP are used to define acquisition functions such as lower confidence bound (LCB): $\alpha_{LCB}(x)=\mu_t(x)-\beta_t^{1/2}\cdot \sigma_t(x)$, where $\beta_t$ is a hyperparameter balancing between exploitation and exploration. 

From modeling perspectives, the core component of graph GPs is the graph kernels that measure the similarity between graphs. Classic graph kernels include random walk (RW) \citep{gartner2003graph}, subgraph matching (SM) \citep{kriege2012subgraph}, shortest-path (SP) \citep{borgwardt2005shortest}, Weisfeiler-Lehman (WL) \citep{shervashidze2011weisfeiler}, and Weisfeiler-Lehman optimal transport (WLOA) \citep{kriege2016valid} kernels; we refer the reader to \citet{vishwanathan2010graph,borgwardt2020graph, kriege2020survey,nikolentzos2021graph} for comprehensive details about graph kernels. In this work, we consider SP kernels used for graph BO in \citep{xie2025bogrape}. Mathematically, for two node labeled graphs $G^1$ and $G^2$, denote $V^1$ and $V^2$ as their node sets, resp., $l_v$ as the label of $v$, and $d_{u,v}$ as the shortest distance from node $u$ to node $v$. The SP kernel is defined as:
\begin{equation}\label{eq:SP_kernel}\tag{$k_g$}
    \begin{aligned}
        k_\mathit{SP}(G^1,G^2)=\frac{1}{n_1^2n_2^2}\sum\limits_{u_1,v_1\in V^1,u_2,v_2\in V^2}\mathbf 1(l_{u_1}=l_{u_2})\cdot \mathbf 1(d_{u_1,v_1}=d_{u_2,v_2})\cdot \mathbf 1(l_{v_1}=l_{v_2}),
    \end{aligned}    
\end{equation}
where $n_1^2n_2^2$ is a normalizing coefficient with $n_1$ and $n_2$ as the node number of $G^1$ and $G^2$, resp.

\subsection{Graph acquisition optimization}\label{subsec:acquisition_optimization}
The major challenge of graph BO is the acquisition optimization, which seeks to find the graph structure with optimal acquisition function value and is often required for convergence proofs. Encoding a graph search space and acquisition function as optimization constraints is non-trivial, and most existing works follow a sample-then-evaluate procedure to avoid directly optimizing over discrete space, e.g., \citep{kandasamy2018transport,ru2021interpretable,wan2021adversarial,wan2023bayesian}. From a discrete optimization viewpoint, \citet{xie2025bogrape} first formulate graph space and shortest-path graph kernels using MIP, and propose BoGrape as a graph BO framework that can globally optimize the lower confidence bound (LCB) acquisition:
\begin{equation}\label{eq:BoGrape}\tag{Acq-Opt}
    \begin{aligned}
        \min\limits_{x\in\mathcal X}&~\mu_t-\beta_t^{1/2}\cdot \sigma_t && \gets~\text{acquisition function}\\
        \text{s.t.}&~\mu_t=K_{xX}K_{XX}^{-1}\bm{y} && \gets~\text{posterior mean}\\
        &~\sigma^2_t\le K_{xx}-K_{xX}K_{XX}^{-1}K_{Xx} && \gets~\text{posterior mean}\\
        &~K_{xX^i}=k(x, X^i),~\forall 1\le i<t &&\gets~\text{graph kernel}
    \end{aligned}
\end{equation}
However, graphs considered in \citep{xie2025bogrape} are assumed to be strongly connected and have node labels, while graphs involved in NAS are weakly connected graphs, probably with edge labels.

\section{Methodology}\label{sec:methodology}
\subsection{Encoding a graph search space in optimization}\label{subsec:graph_encoding}
Firstly and most importantly, we must properly define the graph search space over which acquisition optimization is performed. In this section, we temporarily ignore node/edge features and focus on graph structures. To avoid graph isomorphism caused by node indexing, we assume that all nodes are labeled differently. Intuitively, encoding such a general graph space is easy, since each graph is uniquely determined by its adjacency matrix, and one only needs to define the $n\times n$ adjacency matrix containing binary variables $A_{u,v}$ that denote the existence of edge $u\to v$. However, this naive encoding has no extra graph information, e.g., connectivity, reachability, shortest distance, which are important for defining acquisition functions and feasible graphs. Encoding these graph properties into decision space is significantly more challenging because we must define constraints that prescribe all variables to have correct values for \textit{any} possible graph in the search space.

The graph encoding introduced in this paper incorporates reachability, shortest distances, and shortest paths for any graph without requiring strong connectivity as in previous work \citep{xie2024global}. These metrics are then used to encode shortest-path graph kernels for graph BO. To begin with, we define variables corresponding to relevant graph properties in Table \ref{tab:Var_full}. We consider all graphs with node number ranging from $n_0$ to $n$. For simplicity, we use $[n]$ to denote the set $\{0,1,\dots, n-1\}$.

For each variable $\mathit{Var}$ in Table \ref{tab:notations} , we use $\mathit{Var}(G)$ to denote its value on a given graph $G$. For example, $d_{u,v}(G)$ is the shortest distance from node $u$ to node $v$ in graph $G$. If graph $G$ is given, all variable values can be easily obtained using classic shortest-path algorithms, such as the Floyd–Warshall algorithm \citep{floyd1962algorithm}. However, for optimization over graphs, the variables must be constrained properly so that they take correct values for any given graph, i.e., to match the \texttt{Description} column in Table \ref{tab:notations}. Due to space limitations, we only present the final derived encoding in Eq.~\eqref{eq:final_MIP} and the major theory in Theorem \ref{thm:uniqueness}. Full derivations are given in Appendix \ref{app:shortest_path_encoding}.

Eq.~\eqref{eq:final_MIP} comprises many linear constraints resulting from Conditions $(\mathcal C1)$--$(\mathcal C8)$, as shown in Appendix \ref{app:encoding}. Here we present the final formulation, which conveys the overall idea about how to use constraints to mathematically define variables over graphs. 
Constraints for optimization formulations must be carefully selected. There are often multiple ways to encode a combinatorial problem, but insufficient constraints result in an unnecessarily large search space with symmetric solutions, while excessive constraints may cutoff feasible solutions from the search space. Theorem \ref{thm:uniqueness} guarantees that our encoding precisely formulates the graph space (see Appendix \ref{app:theory} for proofs).

\begin{theorem}\label{thm:uniqueness}
    There is a bijection between the feasible domain restricted by Eq.~\eqref{eq:final_MIP} with size $[n_0,n]$ and the whole graph space with node numbers in $[n_0,n]$.
\end{theorem}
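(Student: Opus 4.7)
The plan is to establish the bijection by explicitly constructing maps in both directions and verifying they are mutual inverses. For the forward direction, given any graph $G$ on $k \in [n_0, n]$ labeled nodes, I would define $\phi(G)$ by setting the adjacency variables $A_{u,v}$ according to the edge set of $G$ (together with whatever indicator variables encode active nodes / the chosen size $k$), and then setting reachability, shortest-distance, and shortest-path variables to their true values on $G$, computed by a standard algorithm such as Floyd--Warshall. I would then verify, condition by condition, that each of $(\mathcal{C}1)$--$(\mathcal{C}8)$ is satisfied by this canonical assignment; since these conditions are derived precisely to encode the corresponding graph-theoretic identities (edge-induced reachability, triangle-inequality-style recurrences for distances, predecessor relations for paths), this step should reduce to routine checks against the definitions.

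For the backward direction, given any feasible solution $x$ of Eq.~\eqref{eq:final_MIP}, I would define $\psi(x)$ as the graph whose edge set is read directly off of the adjacency variables (restricted to the active nodes). The real content is then to show that, once the adjacency matrix is fixed by the feasible solution, the remaining variables are \emph{forced} to take the canonical values produced by the Floyd--Warshall computation on $\psi(x)$. I expect this to go by a sequence of short inductions: first reachability by induction on the length of the shortest witnessing walk, then shortest distances by induction on the distance value (base cases $d_{v,v}=0$ and $d_{u,v}=1$ whenever $A_{u,v}=1$, inductive step via the recursive constraint over intermediate nodes), and finally the shortest-path variables by the predecessor-style recurrence they satisfy.

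Composing the two directions then gives $\psi \circ \phi = \mathrm{id}$ on graphs, because $\phi$ writes the adjacency of $G$ faithfully into the MIP variables and $\psi$ only reads the adjacency back out; and $\phi \circ \psi = \mathrm{id}$ on feasible solutions, because the adjacency agrees by construction while the uniqueness arguments above force every auxiliary variable to agree as well. Size $k$ is preserved on the nose since the active-node indicator appears on both sides.

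The main obstacle I anticipate is the uniqueness step for the shortest-path variables: whenever several distinct shortest paths connect the same endpoints, a naive encoding admits multiple symmetric feasible solutions for the same underlying graph, destroying injectivity of $\phi$. To close this gap I would need to argue either that the conditions $(\mathcal{C}1)$--$(\mathcal{C}8)$ record only path information that is invariant under the choice of shortest path (e.g., which nodes lie on \emph{some} shortest path, or distances rather than explicit path identifiers), or that a tie-breaking rule is baked into the constraints and pins down a canonical path. This is precisely the boundary the authors warn about between ``insufficient'' and ``excessive'' constraints, and verifying that $(\mathcal{C}1)$--$(\mathcal{C}8)$ sit exactly on that boundary is, in my view, the crux of the theorem; the rest of the proof is bookkeeping around the inductive definitions.
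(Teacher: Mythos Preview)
Your plan is correct and mirrors the paper's argument closely: Lemma~\ref{lm:existence} is exactly your forward map $\phi$, and the body of the proof of Theorem~\ref{thm:uniqueness} is the uniqueness step that pins down $\psi$. Two remarks.

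First, your tie-breaking worry dissolves via your first alternative. Condition~$(\mathcal C8)$ forces, in any feasible solution with $r_{u,w}=r_{w,v}=1$, that $\delta_{u,v}^w=1$ if and only if $d_{u,w}+d_{w,v}=d_{u,v}$; together with $(\mathcal C6)$ this means $\delta_{u,v}^w$ records precisely ``$w$ lies on \emph{some} shortest $u$--$v$ path,'' an invariant of $G$ that is well defined regardless of how many shortest paths exist. No canonical-path rule is needed.

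Second, the paper does not run three separate inductions (reachability, then distances, then path indicators); it runs a single induction on $\min\bigl(d_{u,v}(G),\,d_{u,v}\bigr)$ and establishes $r_{u,v}=r_{u,v}(G)$, $d_{u,v}=d_{u,v}(G)$, $\delta_{u,v}^w=\delta_{u,v}^w(G)$ simultaneously at each level. This is not just cosmetic: the implication $r_{u,v}=1\Rightarrow r_{u,v}(G)=1$ already relies on the distance variables to supply a well-founded measure, since from $(\mathcal C7)$ and $(\mathcal C8)$ the intermediate node $w$ with $\delta_{u,v}^w=1$ satisfies $d_{u,w},d_{w,v}<d_{u,v}$. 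Your sequential scheme can be made to work, but the dependencies between the three variable families make the unified induction cleaner.
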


The encoding proposed in \citep{xie2025bogrape} requires connected undirected graphs or strongly connected directed graphs, while our encoding Eq.~\eqref{eq:final_MIP} is more general and formulates the whole graph space without connectivity requirements. Observe that our encoding Eq.~\eqref{eq:final_MIP} can be easily restricted to the encoding in \citep{xie2025bogrape} by adding the following constraints:

\textbf{Undirected:} Add symmetry constraints to get undirected graphs:
\begin{equation*}
    \begin{aligned}
    A_{u,v}=A_{v,u},~r_{u,v}=r_{v,u},~d_{u,v}=d_{v,u},~\delta_{u,v}^w=\delta_{v,u}^w,~\forall u,v,w\in [n],~u<v.
    \end{aligned}
\end{equation*}

\textbf{Strong connectivity:} Each existing node can reach all other existing nodes, i.e.,
\begin{equation*}
    \begin{aligned}
        A_{u,u}=A_{v,v}=1\Rightarrow r_{u,v}=1,~\forall u,v\in [n],~u\neq v,
    \end{aligned}
\end{equation*}
which can be equivalently rewritten as the following constraint:
\begin{equation*}
    \begin{aligned}
        r_{u,v}\ge A_{u,u}+A_{v,v}-1,~\forall u,v\in [n],~u\neq v.
    \end{aligned}
\end{equation*}

\begin{remark}
    Note that strong connectivity reduces to connectivity for undirected graphs.
\end{remark}

\begin{table}[t]
    \centering
    \small
     \caption{Variables introduced to encode shortest paths for an arbitrary graph. Since the shortest distance between two nodes is always less than $n$, we use $n$ to denote infinity, i.e., $d_{u,v}=n$ means node $u$ cannot reach node $v$.}
     \label{tab:Var_full}
    \begin{tabular}{ccc}
        \toprule
        Variables & Domain & Description \\ 
        \midrule
        $A_{v,v},~v\in [n]$ & $\{0,1\}$ & if node $v$ exists\\
        $A_{u,v},~u,v\in [n],~u\neq v$ & $\{0,1\}$ &  if edge $u\to v$ exists \\
         $r_{u,v},~u,v\in [n]$ & $\{0,1\}$ & if node $u$ can reach node $v$\\
         $d_{u,v},~u,v\in [n]$ & $[n+1]$ &  the shortest distance from node $u$ to node $v$ \\
         $\delta_{u,v}^w,~u,v,w\in [n]$ & $\{0,1\}$ & if node $w$ appears on the shortest path from node $u$ to node $v$ \\
         \bottomrule
    \end{tabular}
    \label{tab:notations}
\end{table}

\begin{equation}\label{eq:final_MIP}\tag{Graph-Encoding}
    \left\{
    \begin{aligned}
        \sum\limits_{v\in [n]}A_{v,v}&\ge n_0&&\\
        A_{v,v}&\ge A_{v+1,v+1}\\
        2\cdot A_{u,v}&\le A_{u,u}+A_{v,v}\\
        2\cdot r_{u,v}&\le A_{u,u}+A_{v,v}\\
        d_{u,v}&\ge n\cdot (1-A_{u,u})\\
        d_{u,v}&\ge n\cdot (1-A_{v,v})\\
        r_{v,v}&=1\\
        d_{v,v}&=0\\
        \delta_{v,v}^v&=1\\
        \delta_{v,v}^w&=0\\
        r_{u,v}&\ge A_{u,v}\\
        d_{u,v}&\ge 2-A_{u,v}\\
        d_{u,v}&\le 1+(n-1)\cdot (1-A_{u,v})\\
        d_{u,v}&\le n-r_{u,v}\\
        d_{u,v}&\ge n-(n-1)\cdot r_{u,v}\\
        r_{u,w}+r_{w,v}&\ge 2\cdot \delta_{u,v}^w\\
        r_{u,v}&\ge r_{u,w}+r_{w,v}-1\\
        \delta_{u,v}^u&=\delta_{u,v}^v=1\\
        \sum\limits_{w\in [n]}\delta_{u,v}^w&\ge 2+r_{u,v}-A_{u,v}\\
        \sum\limits_{w\in [n]}\delta_{u,v}^w&\le 2+(n-2)\cdot(r_{u,v}-A_{u,v})\\
        d_{u,v}&\le d_{u,w}+d_{w,v}-(1-\delta_{u,v}^w)+(n+1)\cdot (2-r_{u,w}-r_{w,v})\\
        d_{u,v}&\ge d_{u,w}+d_{w,v}-2n\cdot (1-\delta_{u,v}^w)
    \end{aligned}
    \right.
\end{equation}  

\subsection{Restricting search space to the NAS domain}\label{subsec:NAS_domain}
The graphs considered in NAS are acyclic digraphs (DAGs), which are straightforward for formulate by adding the following constraints to Eq.~\eqref{eq:final_MIP}:
\begin{equation*}
    \begin{aligned}
        r_{u,v}+r_{v,u}\le 1,~\forall u,v\in [n],~u<v.
    \end{aligned}
\end{equation*}
In practice, however, the graph structures are more specific, e.g., having a single source (input) and single sink (output), and each graph may include node/edge labels. We consider DAGs with one source and one sink, which is the most classic setting in cell-based NAS. Based on the label type, we investigate two scenarios, i.e., node-labeled and edge-labeled DAGs, corresponding to the most commonly used benchmarks NAS-Bench-101 and NAS-Bench-201, resp.

\textbf{Node-labeled DAGs:} Following the NAS-Bench-101 \citep{ying2019bench} setting, we consider DAGs with $n$ nodes, at most $E$ edges, and $L_n$ different node labels (including two extra labels to identify the source and the sink). W.l.o.g., we use the first label for the source, and the last label for the sink. Introducing variable $F_{v,l}\in \{0,1\}$ to represent whether node $v\in [n]$ has label $l\in [L_n]$, we can write:
\begin{subequations}
    \begin{empheq}[left=\empheqlbrace]{align}
        &A_{u,v}=0,~r_{u,v}=0,~d_{u,v}=n,~\delta_{u,v}^w=0,~\forall u,v,w\in [n],~u>v,~w\neq u,v\label{eq:101_index}\\
        &r_{0,v}=1,~F_{0,0}=1,~F_{v,0}=0,~\forall v\in [n],~v\neq 0\label{eq:101_source}\\
        &r_{v,n-1}=1,~F_{n-1,L_n-1}=1,~F_{v,L_n-1}=0,~\forall v\in [n],~v\neq n-1\label{eq:101_sink}\\
        &\textstyle\sum_{l\in [L_n]}F_{v,l}=1,~\forall v\in [n]\label{eq:101_label}\\
        &\textstyle\sum_{u<v}A_{u,v}\le E\label{eq:101_edge}
    \end{empheq}
\end{subequations}
Eq.~\eqref{eq:101_index} enforces that each edge starts from the node with smaller index to reduce the number of isomorphic graphs. Eq.~\eqref{eq:101_source} sets node $0$ as the source, from which every other node can be reached. 
Eq.~\eqref{eq:101_sink} sets node $(n-1)$ as the sink, which every other node can reach. Eq.~\eqref{eq:101_label} enforces each node to take one label, and Eq.~\eqref{eq:101_edge} limits the maximal number of edges.

\begin{remark}
    NAS-Bench-101 is a particular instance of the above, i.e., with $n=7,~E=9,~L_n=5$.
\end{remark}

\textbf{Edge-labeled DAGs:} Following the NAS-Bench-201 \citep{dong2020bench} and DARTS settings, we consider DAGs with $n$ nodes (meaning all nodes are already indexed) and $L_e$ edge labels. Introducing variable $F_{u\to v,l}$ to represent whether edge $u\to v$ (with $u< v$) has label $l\in [L_e]$, we have the encoding:
\begin{subequations}
    \begin{empheq}[left=\empheqlbrace]{align}
        &A_{u,v}=0,~r_{u,v}=0,~d_{u,v}=n,~\delta_{u,v}^w=0,~\forall u,v,w\in [n],~u>v,~w\neq u,v\label{eq:201_index}\\
        &r_{0,v}=r_{v,n-1}=1,~\forall v\in [n]\label{eq:201_source}\\
        &\textstyle\sum_{l\in [L_e]}F_{u\to v,l}=A_{u,v},~\forall u,v\in [n],~u<v\label{eq:201_label}
    \end{empheq}
\end{subequations}
Eq.~\eqref{eq:201_index} is the same as Eq.~\eqref{eq:101_index}, Eq.~\eqref{eq:201_label} sets node $0$ as the source  and node $(n-1)$ as the sink, and Eq.~\eqref{eq:201_label} forces one edge label for each existing edge and no edge labels for nonexistent edges.

\begin{remark}
    NAS-Bench-201 is a particular instance of the above, i.e., with $n=4,~L_e=4$. Note that NAS-Bench-201 has $5$ labels: one label denotes  nonexistance, which is not needed in our encoding.
\end{remark}

\subsection{Encode graph kernels}\label{subsec:kernel_encoding}
We take the triple $(G,F_n,F_e)$ as a graph with node labels $F_n=\{F_{v,l}\}_{v\in [n],~l\in [L_n]}$ and edge labels $F_e=\{F_{u\to v,l}\}_{u,v\in [n],~l\in [L_e]}$. Given two labeled graphs $X^1=(G^1,F_n^1,F_e^1)$ and $X^2=(G^2,F_n^2,F_e^2)$ and denoting their node numbers as $n_1$ and $n_2$, resp., we define the following general kernel form:
\begin{equation}\label{eq:linear_kernel}\tag{linear}
    \begin{aligned}
        k_{\text{lin}}(X^1,X^2)=\alpha \cdot k_g(G^1,G^2)+\beta\cdot k_n(F_n^1,F_n^2)+\gamma\cdot k_e(F_e^1,F_e^2),
    \end{aligned}
\end{equation}
where kernels $k_g,k_n,k_e$ quantify similarity over graph structure, node labels, and edge labels, resp.

We then denote the optimization target as an unknown graph $x=(G,F_n,F_e)$, and the available data points as $X=\{X^i,y^i\}_{i=1}^{t-1}$ with $X^i=(G^i,F_n^i,F_e^i)$. After properly defining the search space in Section \ref{subsec:NAS_domain}, the last step is to encode kernel-relevant terms in Eq.~\eqref{eq:BoGrape}, i.e., $k_{xX^i}$ and $k_{xx}$. For graph structure $G$ and node labels $F_n$, the encoding of the graph structure \eqref{eq:SP_kernel} kernel, i.e., $k_{g}(G,G),k_{g}(G,G^i)$ and binary node features, i.e., $k_n(F_n,F_n),k_n(F_n,F_n^i)$ are given in \citep{xie2025bogrape}. For completeness, we provide details in Appendix \ref{app:kernel_encoding}. 

\textbf{Edge label encoding:} Edge labels can be treated in a similar way to node labels. However, several NAS settings have more specific properties, i.e., all nodes are indexed when edge labels are present, and all graphs have the same size. Thus we alternatively propose the following simplified form:
\begin{equation}\label{eq:kernel_edge}\tag{$k_e$}
    \begin{aligned}
        k_e(F_e^1,F_e^2)=\frac{2}{n(n-1)}\langle F_e^1, F_e^2\rangle=\frac{2}{n(n-1)}\sum\limits_{u<v}\sum\limits_{l\in [L_e]}F_{u\to v,l}^1\cdot F_{u\to v,l}^2,
    \end{aligned}
\end{equation}
where $n(n-1)/2$ is a normalizing coefficient, with $n$ as the node number of both $G^1$ and $G^2$, given that a DAG has at most $n(n-1)/2$ edges. 

We take edge kernels as follows, and evaluate their performance in Section~\ref{subsec:kernel_comparison}:
\begin{equation*}
    \begin{aligned}
        k_e(F_e,F_e^i)&=\frac{2}{n(n-1)}\sum\limits_{u<v}\sum\limits_{l\in [L_e]}F_{u\to v,l}^i\cdot F_{u\to v,l},\\
        k_e(F_e,F_e)&=\frac{2}{n(n-1)}\sum\limits_{u<v}\sum\limits_{l\in [L_e]}F_{u\to v,l}^2=\frac{2}{n(n-1)}\sum\limits_{u<v}\sum\limits_{l\in [L_e]}F_{u\to v,l}=\frac{2}{n(n-1)}\sum\limits_{u<v}A_{u,v},
    \end{aligned}
\end{equation*}
where we use the trick that $x^2=x$ for binary $x$ and the relation in Eq.~\eqref{eq:201_label}.

The above defines a formulation for all relevant terms in kernel form \eqref{eq:linear_kernel}. To improve representation ability, we also consider an alternative exponential form defined as~\cite{xie2025bogrape}:
\begin{equation}\label{eq:exponential_kernel}\tag{exponential}
    \begin{aligned}
        k_{\exp}(X^1,X^2)=\sigma_k^2\cdot \exp(k_{\text{lin}}(X^1,X^2)),
    \end{aligned}
\end{equation}
where the variance $\sigma_k^2$ controls the magnitude of kernel values.

\section{Experiments}\label{sec:experiments}
\begin{table}[t]
    \caption{NAS algorithms comparison, including whether the method is BO-based, the surrogate model used, and how acquisition function is optimized (if BO-based). The superscript `$^a$' denotes methods that are not originally designed for NAS but can be adapted for NAS settings. For surrogate models, we use `v' to denote models using vectorized embeddings of graphs and `g' to denote models that directly over graph spaces.}
    \label{table:baselines}
    \begin{center}
    \begin{tabular}{cccc}
        \toprule
        Algorithms & BO-based & Surrogate & Acquisition optimization\\
        \midrule
        Random & $\times$ & - & - \\
        DNGO$^a$~\citep{snoek2015dngo} & \checkmark & BNN(v) & mutation \\
        BOHAMIANN$^a$~\citep{springenberg2016bohamiann} & \checkmark & BNN(v) & mutation \\
        NASBOT~\citep{kandasamy2018transport} & \checkmark & GP(g) & mutation \\
        Evolution~\citep{real2019evolution} & $\times$ & - & - \\
        GP-BAYESOPT$^a$~\citep{neiswanger2019probo} & \checkmark & GP(v) & sampling \\
        GCN~\citep{wen2020gcn} & $\times$ & - & - \\
        BONAS~\citep{shi2020bonas} & \checkmark & GCN(v) & sampling \\
        Local search~\citep{white2021localsearch} & $\times$ & - & - \\
        BANANAS~\citep{white2021bananas} & \checkmark & NN(v) & mutation \\
        NAS-BOWL~\citep{ru2021interpretable} & \checkmark & GP(g) & mutation \\
        NAS-GOAT (ours) & \checkmark & GP(g) & MIP \\
     \bottomrule
    \end{tabular}
    \end{center}
\end{table} 

All experiments are performed on a 4.2 GHz Intel Core i7-7700K CPU with 16 GB memory. For our methods, we use GPflow \citep{matthews2017GPflow} to implement GP models, and Gurobi \citep{gurobi2024} to solve MIPs. For kernel comparison, GraKel \citep{siglidis2020grakel} is used to implement graph kernels. We use the published implementations of NAS-BOWL \citep{ru2021interpretable} and Naszilla \citep{white2020study,white2021localsearch,white2021bananas} for all other NAS baselines. 

\subsection{Benchmarks}\label{subsec:benchmarks}

We choose the most popular benchmarks used in NAS literature, i.e., NAS-Bench-101 \citep{ying2019bench} and NAS-Bench-201 \citep{dong2020bench}, to evaluate the performance of our proposed method. These two benchmarks correspond to the node- and edge-labeled cases as discussed in Section \ref{subsec:NAS_domain}, resp..

\textbf{NAS-Bench-101:} DAGs with one source, one sink, at most 7 nodes and 9 edges, and 3 different node operations. Only the source is labeled as operation \texttt{IN}, only the sink is labeled as operation \texttt{OUT}, and each of other nodes has one of the remaining three operations: 3x3 convolution, 1x1 convolution, or 3x3 max pooling. After removal of duplicates, NAS-Bench-101 has approximately 423k unique architectures. Each architecture is trained on CIFAR-10 to obtain validation and test accuracies.

\textbf{NAS-Bench-201:} Dense DAGs with 4 nodes. Each of the 6 edges has a label chosen from 5 operation types: zeroize, skip-connection, 1x1 convolution, 3x3 convolution, or 3x3 average pooling. NAS-Bench-201 has 15,625 architectures in total, each of which has various metrics including validation and test accuracies over three datasets: CIFAR10, CIFAR100, and ImageNet-16-120. 

In both benchmarks, each architecture is trained 20 times with varying random seeds, which could be used as a noisy objective function as suggested in \citep{ru2021interpretable}. 
We conduct experiments for both scenarios, reporting results for the deterministic setting here, i.e., averaging the accuracies over multiple random seeds. Results for noisy setting are given in Appendix \ref{app:full_experiments}. 

\begin{figure}[t]
    \centering
    \includegraphics[width=\linewidth]{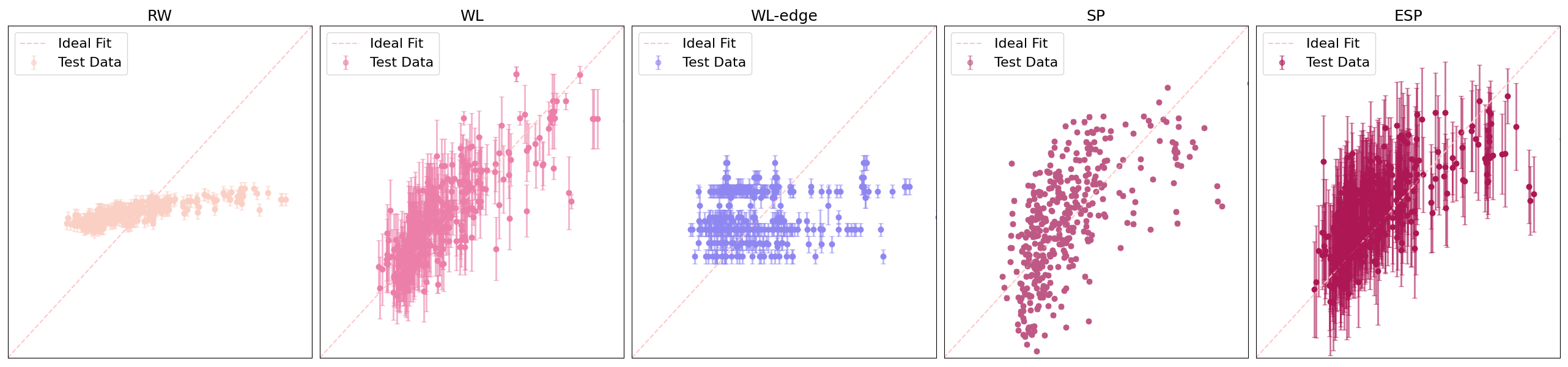}
    \caption{Predictive performance of graph GPs with different kernels. 50 and 400 architectures are randomly sampled from NAS-Bench-201 for training and testing, resp. Predicted deterministic validation error are plotted against the true values, with one standard deviation as error bars.}
    \label{fig:N201_kernel_compare}
\end{figure}

\begin{table}[t]
    \caption{GP model performance metrics using different graph kernels. For each dataset, 50 and 400 architectures are sampled for training and testing, resp. Predictive performance metrics are averaged over 20 replications and reported in the table, with one standard deviation in the brackets. The best method is marked in \textbf{bold} metric-wisely for each dataset.}
    \label{table:model_performance}
    \begin{center}
    \begin{small}
    \begin{tabular}{ccccccc}
        \toprule
        Dataset & \multicolumn{3}{c}{NAS-Bench-101} & \multicolumn{3}{c}{NAS-Bench-201}\\
        \midrule
        Kernel & RMSE $\downarrow$ & MNLL $\downarrow$ & Spearman $\uparrow$ & RMSE $\downarrow$ & MNLL $\downarrow$ & Spearman $\uparrow$\\
        \midrule
        RW & 0.29(0.01) & 30.29(5.26) & 0.81(0.04) & 0.32(0.02) & 43.07(19.14) & 0.78(0.05)\\
        WL & 0.15(0.02) & \textbf{-0.77(0.07)} & 0.87(0.03) & \textbf{0.23(0.04)} & 0.98(1.02) & \textbf{0.81(0.07)}\\
        WL-edge & - & - & - & 0.37(0.02) & 46.00(16.73) & 0.11(0.09)\\
        SP & 0.21(0.05) & 227.67(114.59) & 0.83(0.04) & 0.33(0.04) & 465.15(152.20) & 0.63(0.10)\\
        ESP & \textbf{0.11(0.02)} & 28.83(15.02) & \textbf{0.93(0.02)} & 0.30(0.04) & \textbf{0.36(0.22)} & 0.64(0.11)\\
     \bottomrule
    \end{tabular}
    \end{small}
    \end{center}
\end{table} 

\subsection{Baselines}\label{subsec:baselines}
We compare our method, NAS-GOAT, which is capable of \textbf{g}lobally \textbf{o}ptimizing \textbf{a}cquisi\textbf{t}ion in form \eqref{eq:BoGrape} with the encoding introduced in Section \ref{sec:methodology}, against state-of-the-art baselines in NAS, summarized in Table~\ref{table:baselines}. BO-based baselines either use GPs or neural predictors as the surrogate model. Graph inputs are featurized into vectors using different encoding methods before feeding to NN surrogates~\citep{snoek2015dngo, springenberg2016bohamiann, shi2020bonas, white2021bananas}. For GP surrogate models, graphs can be directly used as data points by defining a proper graph kernel~\citep{ru2021interpretable} or graph similarity metric~\citep{kandasamy2018transport}. In addition to BO-based algorithms, we also include classic methods in NAS such as random search (Random), regularized evolution (Evolution)~\citep{real2019evolution}, local search (Local seach)~\citep{white2021localsearch} and GCN predictor (GCN)~\citep{wen2020gcn}. For BO-based methods, optimization of acquisition functions is achieved through mutation or sampling, while NAS-GOAT is capable of globally acquisition optimization over graph search spaces. More descriptions and implementation details of the baselines can be found in Appendix~\ref{app:full_experiments}.

\subsection{Graph kernels comparison}\label{subsec:kernel_comparison}
Although the focus of this paper is global acquisition optimization rather than graph modeling, the kernel performance is still important to overall BO performance, noting that we employ SP kernels. In this section, we compare the predictive performance of graph GPs equipped with various graph kernels. For NAS-Bench-101 with node labels, we compare RW, WL, and our kernels in form \eqref{eq:linear_kernel} (SP) and \eqref{eq:exponential_kernel} (ESP). For NAS-Bench-201 with edge labels, all architectures are first converted to node-labeled graphs and then evaluated over RW and WL kernels. We also test the performance of WL kernels over the original edge-labeled graphs (denoted as WL-e). Both SP and ESP kernels can directly handle edge labels without conversion. 

Figure \ref{fig:N201_kernel_compare} illustrates the predictive performance of different kernels on NAS-Bench-201 (see Figure \ref{fig:N101_kernel_compare} in Appendix \ref{app:full_experiments} for a similar plot for NAS-Bench-101), and Table \ref{table:model_performance} reports performance metrics including root mean squared error (RMSE) and Spearman's rank correlation (Spearman) showing the predictive accuracy, as well as mean negative log likelihood (MNLL) measuring the uncertainty qualification. The RW kernel does not perform well on both cases. The WL kernel performs significantly better on converted node-labeled graphs compared to the original edge-labeled graphs, which matches the empirical observations in \citep{ru2021interpretable}. WL and ESP kernels have comparably good performance, both of which outperform the SP kernel. Note that a better kernel may not necessarily translate to better optimization results, since complex kernel forms bring extra difficulties in acquisition optimization, resulting in a computational trade-off between the modeling and optimization steps.  

\subsection{Graph BO for NAS}\label{subsec:BO_NAS}
In this section, we test the performance of NAS-GOAT in a full BO loop over NAS benchmarks against baselines. Following the batch setting in \cite{white2021bananas,ru2021interpretable}, we conduct $30$ BO iterations starting with 10 initial samples. At each iteration, we solve the MIP defined by Eq.~\eqref{eq:final_MIP} using Gurobi \cite{gurobi2024} and store the best 5 candidates (in terms of acquisition function value) to evaluate.

We denote our methods as NAS-GOAT-L (using kernel \eqref{eq:linear_kernel}) and NAS-GOAT-E (using kernel \eqref{eq:exponential_kernel}) to differentiate the kernel used in graph GP. All baselines introduced in Table \ref{table:baselines} are implemented, but we only report Random, GCN~\citep{wen2020gcn}, Evolution~\citep{real2019evolution}, NASBOT~\citep{kandasamy2018transport}, BANANAS~\citep{white2021bananas} and NAS-BOWL~\citep{ru2021interpretable} here for conciseness, since they usually achieve better results. Full baselines are given in Appendix \ref{app:full_experiments}. Following NAS literature, we minimize over validation error and report both validation and test errors. As shown in Figure \ref{fig:deterministic_main_paper}, both NAS-GOAT-L and NAS-GOAT-E find (near-)optimal architectures in all scenarios. NAS-GOAT-L achieves slightly better performance perhaps owing to its simpler form, making the resulting optimization formulation less complicated.

\begin{figure}
     \centering
     \includegraphics[width=\textwidth]{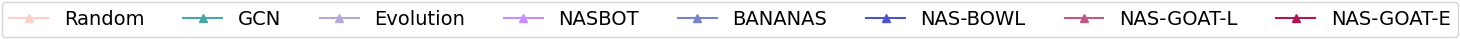}\\
     \vspace{.3mm}
     \begin{subfigure}[b]{0.245\textwidth}
         \centering
         \includegraphics[width=\textwidth]{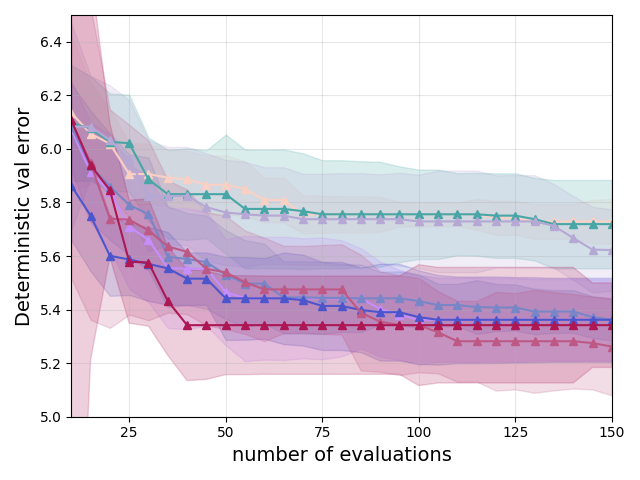}
         \caption{N101 (CIFAR10)}
     \end{subfigure}
     \hfill
    \begin{subfigure}[b]{0.245\textwidth}
         \centering
         \includegraphics[width=\textwidth]{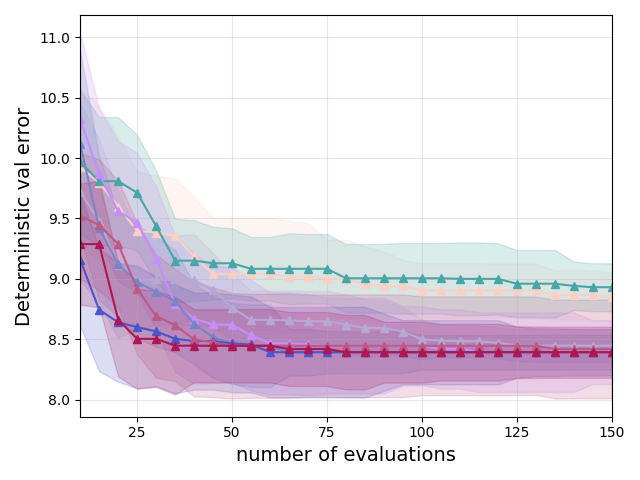}
         \caption{N201 (CIFAR10)}
     \end{subfigure}
     \hfill
     \begin{subfigure}[b]{0.245\textwidth}
         \centering
         \includegraphics[width=\textwidth]{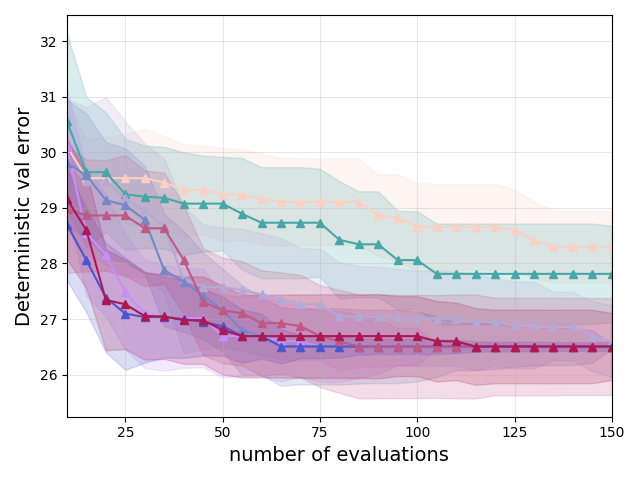}
         \caption{N201 (CIFAR100)}
     \end{subfigure}
    \hfill
     \begin{subfigure}[b]{0.245\textwidth}
         \centering
         \includegraphics[width=\textwidth]{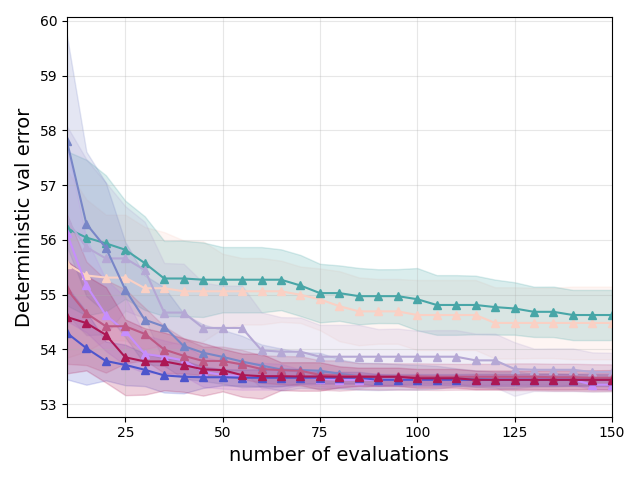}
         \caption{N201 (ImageNet)}
     \end{subfigure}
     \hfill
     \begin{subfigure}[b]{0.245\textwidth}
         \centering
         \includegraphics[width=\textwidth]{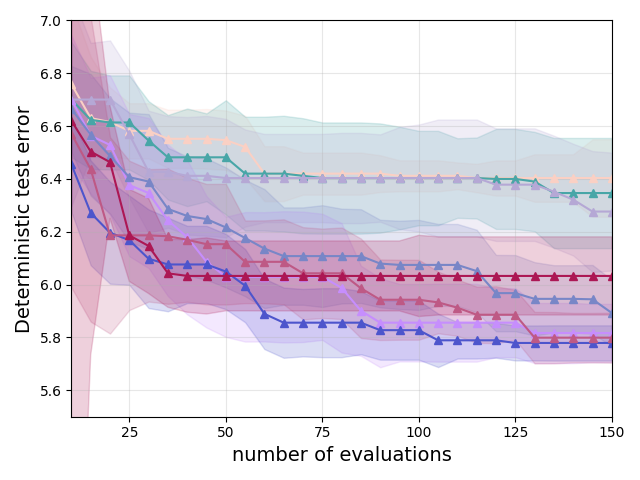}
         \caption{N101 (CIFAR10)}
     \end{subfigure}
     \hfill
     \begin{subfigure}[b]{0.245\textwidth}
         \centering
         \includegraphics[width=\textwidth]{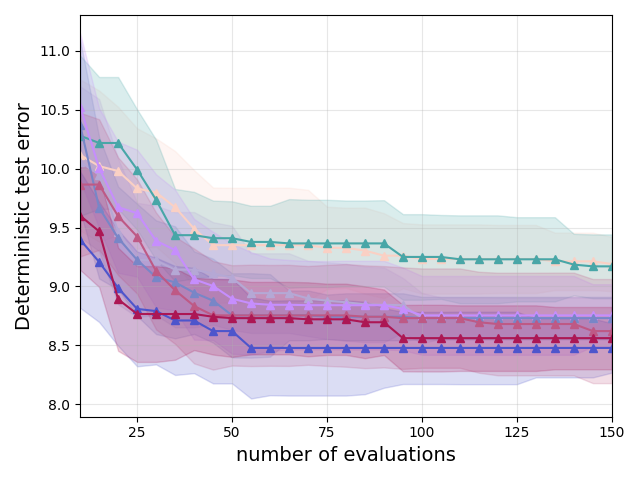}
         \caption{N201 (CIFAR10)}
     \end{subfigure}
     \hfill
     \begin{subfigure}[b]{0.245\textwidth}
         \centering
         \includegraphics[width=\textwidth]{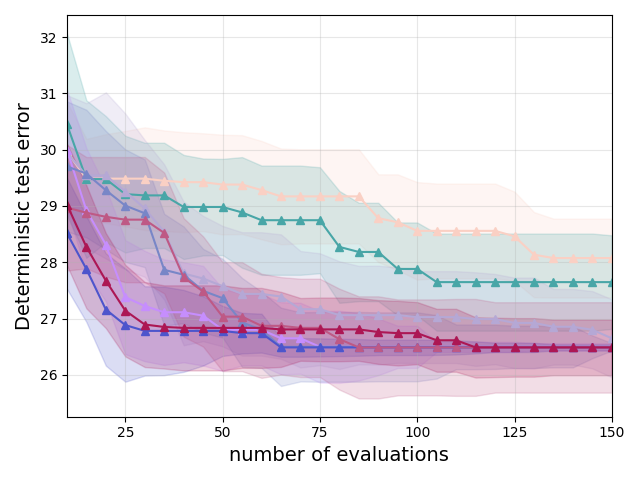}
         \caption{N201 (CIFAR100)}
     \end{subfigure}
     \hfill
     \begin{subfigure}[b]{0.245\textwidth}
         \centering
         \includegraphics[width=\textwidth]{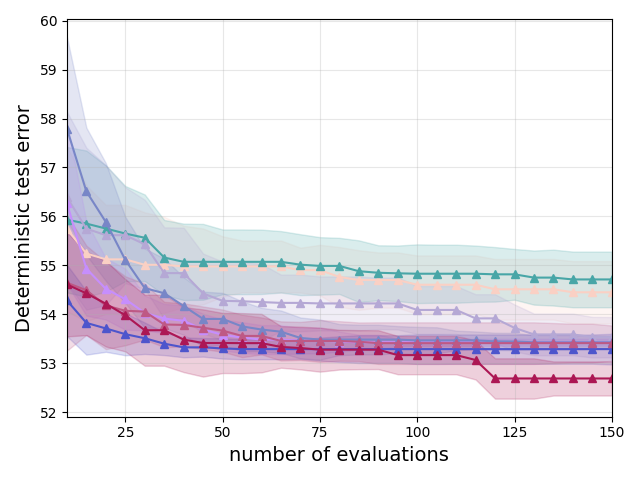}
         \caption{N201 (ImageNet)}
     \end{subfigure}
    \caption{Numerical results of Graph BO on NAS-Bench-101 (N101) and NAS-Bench-201 (N201). (\textbf{Top}) Deterministic validation error. (\textbf{Bottom}) The corresponding test error. Median with one standard deviation over 20 replications is plotted.}
    \label{fig:deterministic_main_paper}
\end{figure}

\section{Conclusion}\label{sec:conclusion}
This work considers global acquisition optimization in graph BO for NAS. The graph search space is precisely encoded into an equivalent variable space for discrete optimization. A general kernel is designed to handle both node and edge labels, and formulations are proposed based on the graph encoding. After adding suitable constraints to remove invalid architectures, we are able to globally optimize the acquisition function at each BO iteration, demonstrating promising results on commonly used NAS benchmarks. Future works could consider more graph kernels beyond shortest-path kernels, or apply the proposed method to more graph-based decision making problems. 

\section*{Acknowledgments}
The authors gratefully acknowledge support from a Department of Computing Scholarship (YX), BASF SE, Ludwigshafen am Rhein (SZ), Engineering and Physical Sciences Research Council [grant numbers EP/W003317/1 and EP/X025292/1] (RM, CT, JQ), a BASF/RAEng Research Chair in Data-Driven Optimisation (RM), a BASF/RAEng Senior Research Fellowship (CT). RM holds concurrent appointments as a Professor at Imperial and as an Amazon Scholar. This paper describes work performed at Imperial prior to joining Amazon and is not associated with Amazon.

\bibliographystyle{abbrvnat}
\bibliography{ref}

\newpage
\appendix
\onecolumn
\section{Shortest path encoding}\label{app:shortest_path_encoding}

\subsection{Encoding}\label{app:encoding}
For each variable $\mathit{Var}$ in Table \ref{tab:notations} , we use $\mathit{Var}(G)$ to denote its value on a given graph $G$. For example, $d_{u,v}(G)$ is the shortest distance from node $u$ to node $v$ in graph $G$. If graph $G$ is given, all variable values can be easily obtained using classic shortest-path algorithms like the Floyd–Warshall algorithm \citep{floyd1962algorithm}. However, for graph optimization, those variables need to be constrained properly so that they have correct values for any given graph. In this section, we first provide a list of necessary conditions that these variables should satisfy based on their definitions. Then we will prove that these conditions are sufficient in next section.

\textbf{Condition ($\mathcal C1$):} At least $n_0$ nodes exist. W.l.o.g., assume that nodes with smaller indexes exist:
\begin{equation*}
    \left\{
    \begin{aligned}
    \sum\limits_{v\in [n]}A_{v,v}&\ge n_0&&\\
    A_{v,v}&\ge A_{v+1,v+1},&&\forall v\in [n-1]
    \end{aligned}
    \right.
\end{equation*}

\textbf{Condition ($\mathcal C2$):} Initialization for nonexistent nodes, i.e., if either node $u$ or node $v$ does not exist, edge $u\to v$ cannot exists, node $u$ cannot reach node $v$, and the shortest distance from node $u$ to node $v$ is infinity, i.e., $n$:
\begin{equation*}
    \begin{aligned}
        \min(A_{u,u},A_{v,v})=0\Rightarrow A_{u,v}=0,~r_{u,v}=0,~d_{u,v}=n,~\forall u,v\in [n],~u\neq v
    \end{aligned}
\end{equation*}
which could be rewritten as the following linear constraints:
\begin{equation*}
    \left\{
    \begin{aligned}
        2\cdot A_{u,v}&\le A_{u,u}+A_{v,v},&&\forall u,v\in [n],~u\neq v\\
        2\cdot r_{u,v}&\le A_{u,u}+A_{v,v},&&\forall u,v\in [n],~u\neq v\\
        d_{u,v}&\ge n\cdot (1-A_{u,u}),&&\forall u,v\in [n],~u\neq v\\
        d_{u,v}&\ge n\cdot (1-A_{v,v}),&&\forall u,v\in [n],~u\neq v\\
    \end{aligned}
    \right.
\end{equation*}

\textbf{Condition ($\mathcal C3$):} Initialization for single node, i.e., node $v$ can reach itself with shortest distance as $0$, and node $v$ is obviously the only node that appears in the shortest path from node $v$ to itself:
\begin{equation*}
    \left\{
    \begin{aligned}
        r_{v,v}&=1,&&\forall v\in [n]\\
        d_{v,v}&=0,&&\forall v\in [n]\\
        \delta_{v,v}^v&=1,&&\forall v\in [n]\\
        \delta_{v,v}^w&=0,&&\forall v,w\in [n],~v\neq w\\
    \end{aligned}
    \right.
\end{equation*}

\textbf{Condition ($\mathcal C4$):} Initialization for each edge, i.e., if edge $u\to v$ exists, node $u$ can reach node $v$ with shortest distance as $1$. Otherwise, the shortest distance from node $u$ to node $v$ is larger than $1$:
\begin{equation*}
    \begin{aligned}
        A_{u,v}=1&\Rightarrow r_{u,v}=1,~d_{u,v}=1,&&\forall u,v\in [n],~u\neq v\\
        A_{u,v}=0&\Rightarrow d_{u,v}>1,&&\forall u,v\in [n],~u\neq v\\
    \end{aligned}
\end{equation*}
which could be rewritten as the following linear constraints:
\begin{equation*}
    \left\{
    \begin{aligned}
        r_{u,v}&\ge A_{u,v},&&\forall u,v\in [n],~u\neq v\\
        d_{u,v}&\ge 2-A_{u,v},&&\forall u,v\in [n],~u\neq v\\
        d_{u,v}&\le 1+(n-1)\cdot (1-A_{u,v}),&&\forall u,v\in [n],~u\neq v\\
    \end{aligned}
    \right.
\end{equation*}    

\textbf{Condition ($\mathcal C5$):} Compatibility between distance and reachability, i.e,, node $u$ can reach node $v$ if and only the shortest distance from node $u$ to node $v$ is finite:
\begin{equation*}
    \begin{aligned}
        d_{u,v}<n\Leftrightarrow r_{u,v}=1,~\forall u,v\in[n],~u\neq v
    \end{aligned}
\end{equation*}
which could be rewritten as the following linear constraints:
\begin{equation*}
    \left\{
    \begin{aligned}
        d_{u,v}&\le n-r_{u,v},&&\forall u,v\in [n],~u\neq v\\
        d_{u,v}&\ge n-(n-1)\cdot r_{u,v},&&\forall u,v\in [n],~u\neq v\\
    \end{aligned}
    \right.
\end{equation*}  

\textbf{Condition ($\mathcal C6$):} Compatibility between path and reachability, i.e., (i) if node $w$ appears in the shortest path from node $u$ to node $v$, then node $u$ can reach node $w$, and node $w$ can reach node $v$ (the opposite is not always true), which means that node $u$ can reach node $v$ via node $w$:
\begin{equation*}
    \begin{aligned}
        \delta_{u,v}^w=1\Rightarrow r_{u,w}=r_{w,v}=1\Rightarrow r_{u,v}=1,\forall u,v,w\in [n],~u\neq v\neq w
    \end{aligned}
\end{equation*}
which could be rewritten as the following linear constraints:
\begin{equation*}
    \left\{
    \begin{aligned}
        r_{u,w}+r_{w,v}&\ge 2\cdot \delta_{u,v}^w,&&\forall u,v,w\in [n],~u\neq v\neq w\\
        r_{u,v}&\ge r_{u,w}+r_{w,v}-1,&&\forall u,v,w\in [n],~u\neq v\neq w
    \end{aligned}
    \right.
\end{equation*} 

\textbf{Condition ($\mathcal C7$):} Construction of shortest path, i.e., (i) always assume that both node $u$ and node $v$ appear in the shortest path from node $u$ to node $v$ for well-definedness, (ii) if edge $u\to v$ exists or node $u$ cannot reach node $v$, then no other nodes can appear in the shortest path from node $u$ to node $v$, (iii) if edge $u\to v$ does not exist but node $u$ can reach node $v$, then at least one node except for node $u$ and node $v$ will appear in the shortest path from node $u$ to node $v$:
\begin{equation*}
    \begin{aligned}
        \delta_{u,v}^u&=\delta_{u,v}^v=1,&&\forall u,v\in [n],~u\neq v\\
        A_{u,v}=1\lor r_{u,v}=0&\Rightarrow \sum\limits_{w\in [n]}\delta_{u,v}^w=2,&&\forall u,v\in [n],~u\neq v\\
        A_{u,v}=0\land r_{u,v}=1&\Rightarrow \sum\limits_{w\in [n]}\delta_{u,v}^w>2,&&\forall u,v\in [n],~u\neq v
    \end{aligned}
\end{equation*}
Observing that $A_{u,v}=1\lor r_{u,v}=0\Leftrightarrow r_{u,v}-A_{u,v}=0$ since $r_{u,v}\ge A_{u,v}$ always holds, we can rewrite these constraints as the following linear constraints:
\begin{equation*}
    \left\{
    \begin{aligned}
        \delta_{u,v}^u&=\delta_{u,v}^v=1,&&\forall u,v\in [n],~u\neq v\\
        \sum\limits_{w\in [n]}\delta_{u,v}^w&\ge 2+r_{u,v}-A_{u,v},&&\forall u,v\in [n],~u\neq v\\
        \sum\limits_{w\in [n]}\delta_{u,v}^w&\le 2+(n-2)\cdot(r_{u,v}-A_{u,v}),&&\forall u,v\in [n],~u\neq v
    \end{aligned}
    \right.
\end{equation*}

\textbf{Condition ($\mathcal C8$):} Triangle inequality of shortest distance, i.e., if node $u$ can reach node $w$ and node $w$ can reach node $v$, then the shortest distance from node $u$ to node $v$ is no larger than the shortest distance from node $u$ to node $w$ then to node $v$, and the equality holds when node $w$ appears in the shortest path from node $u$ to node $v$:
\begin{equation*}
    \begin{aligned}
        \delta_{u,v}^w=1&\Rightarrow d_{u,v}=d_{u,w}+d_{w,v},&&\forall u,v,w\in [n],~u\neq v\neq w\\
        r_{u,w}=r_{w,v}=1\land \delta_{u,v}^w=0&\Rightarrow d_{u,v}<d_{u,w}+d_{w,v},&&\forall u,v,w\in [n],~u\neq v\neq w
    \end{aligned}
\end{equation*}
where we omit $r_{u,w}=r_{w,v}=1$ in the first line since $\delta_{u,v}^w=1$ implies it. 

Similarly, we can rewrite these constraints as the following linear constraints:
\begin{equation*}
    \left\{
    \begin{aligned}
        d_{u,v}&\le d_{u,w}+d_{w,v}-(1-\delta_{u,v}^w)+(n+1)\cdot (2-r_{u,w}-r_{w,v}),&&\forall u,v,w\in [n],~u\neq v\neq w\\
        d_{u,v}&\ge d_{u,w}+d_{w,v}-2n\cdot (1-\delta_{u,v}^w),&&\forall u,v,w\in [n],~u\neq v\neq w
    \end{aligned}
    \right.
\end{equation*} 

Putting Conditions ($\mathcal C1$)--($\mathcal C8$) together presents the final formulation Eq.~\eqref{eq:final_MIP}.

\subsection{Theoretical guarantee}\label{app:theory}

All constraints in Eq.~\eqref{eq:final_MIP} are necessary conditions, i.e., as shown in Lemma \ref{lm:existence}.

\begin{lemma}\label{lm:existence}
    Given any labeled graph $G$ with $n$ nodes, $\{A_{u,v}(G),r_{u,v}(G),d_{u,v}(G),\delta_{u,v}^w(G)\}_{u,v,w\in [n]}$ is a feasible solution of Eq.~\eqref{eq:final_MIP} with $n_0=n$.
\end{lemma}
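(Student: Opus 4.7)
The proof is essentially a verification that each of the linear inequalities appearing in Eq.~\eqref{eq:final_MIP} is in fact a necessary condition, i.e., that the natural graph-theoretic values $A_{u,v}(G)$, $r_{u,v}(G)$, $d_{u,v}(G)$, $\delta_{u,v}^w(G)$ satisfy them. Since $n_0 = n$, the setup is that every node is present, so $A_{v,v}(G) = 1$ for all $v \in [n]$. The plan is therefore to walk once through Conditions $(\mathcal{C}1)$--$(\mathcal{C}8)$ as organized in Appendix~\ref{app:encoding}, and in each case check the inequality from the definitions of the quantities involved.

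First I would dispose of the trivial blocks. Condition $(\mathcal{C}1)$ holds because all $n$ nodes exist, so $\sum_v A_{v,v}(G) = n \ge n_0$ and the monotonicity inequalities collapse to $1 \ge 1$. Condition $(\mathcal{C}2)$ is vacuous for the same reason: the right-hand sides are $2$, matching $A_{u,v}(G), r_{u,v}(G) \le 1$ and $d_{u,v}(G) \ge 0$. Condition $(\mathcal{C}3)$ follows directly from the conventions that a node reaches itself at distance $0$ and that only $v$ itself lies on the degenerate ``path'' from $v$ to $v$. Condition $(\mathcal{C}4)$ follows from the definition of an edge: an edge $u \to v$ is a path of length $1$, so $r_{u,v}(G) = 1$ and $d_{u,v}(G) = 1$, while in the absence of an edge any directed walk from $u$ to $v$ uses at least two edges. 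Condition $(\mathcal{C}5)$ is the standard equivalence ``$u$ reaches $v$ iff $d_{u,v}(G) < \infty$,'' packaged through the convention that $d_{u,v}(G) = n$ encodes unreachability.

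The slightly less mechanical step is to pin down the intended meaning of $\delta_{u,v}^w(G)$, because a graph can admit several shortest $u$-to-$v$ paths. The clean choice, which makes Conditions $(\mathcal{C}7)$ and $(\mathcal{C}8)$ tight, is to fix once and for all \emph{one canonical shortest path} $P_{u,v}$ from $u$ to $v$ whenever $r_{u,v}(G) = 1$, and set $\delta_{u,v}^w(G) = 1$ precisely when $w$ lies on $P_{u,v}$, and $0$ otherwise. Existence of such a canonical choice is immediate (e.g., take the lexicographically smallest shortest path). With this fixed, Condition $(\mathcal{C}6)$ is straightforward: if $w \in P_{u,v}$ then $P_{u,v}$ restricts to shortest sub-paths $u \to w$ and $w \to v$, so both $r_{u,w}(G) = r_{w,v}(G) = 1$; transitivity of reachability handles the second inequality. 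Condition $(\mathcal{C}7)$ then splits into three sub-cases: if $A_{u,v}(G) = 1$, the one-edge path is canonical so only $u,v$ are on it; if $r_{u,v}(G) = 0$ there is no path and only the sentinel values $\delta_{u,v}^u = \delta_{u,v}^v = 1$ apply; and if $A_{u,v}(G) = 0$ with $r_{u,v}(G) = 1$, any shortest path has length $\ge 2$ and thus contains at least one intermediate vertex, giving the strict inequality.

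The main obstacle, and the place requiring the most care, is Condition $(\mathcal{C}8)$, the triangle-inequality block. The upper bound $d_{u,v} \le d_{u,w} + d_{w,v} - (1 - \delta_{u,v}^w) + (n+1)(2 - r_{u,w} - r_{w,v})$ must be verified in two regimes. When $r_{u,w}(G) = r_{w,v}(G) = 1$, the ``big-$M$'' slack vanishes, and (i) if $\delta_{u,v}^w(G) = 1$ the canonical shortest path decomposes at $w$ so $d_{u,v}(G) = d_{u,w}(G) + d_{w,v}(G)$, which even beats the required inequality by $1$; (ii) if $\delta_{u,v}^w(G) = 0$ the ordinary triangle inequality $d_{u,v}(G) \le d_{u,w}(G) + d_{w,v}(G)$ suffices. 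When either $r_{u,w}(G) = 0$ or $r_{w,v}(G) = 0$, the corresponding $d$ value is $n$, and the $(n+1)$-sized slack absorbs everything. The lower bound $d_{u,v} \ge d_{u,w} + d_{w,v} - 2n(1-\delta_{u,v}^w)$ is trivially satisfied when $\delta_{u,v}^w(G) = 0$ (since each $d$-value is at most $n$), and when $\delta_{u,v}^w(G) = 1$ it reduces to the path-decomposition equality already observed. This completes the verification and yields feasibility.
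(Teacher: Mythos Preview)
Your verification has a genuine gap at the definition of $\delta_{u,v}^w(G)$ and in the treatment of the upper bound in Condition~$(\mathcal{C}8)$. You take $\delta_{u,v}^w(G)=1$ to mean that $w$ lies on \emph{one fixed canonical} shortest $u\to v$ path. With that convention, case~(ii) of the $(\mathcal{C}8)$ upper bound does \emph{not} go through: when $r_{u,w}(G)=r_{w,v}(G)=1$ and $\delta_{u,v}^w(G)=0$, the constraint reads $d_{u,v}\le d_{u,w}+d_{w,v}-1$, i.e.\ a \emph{strict} triangle inequality, so the ``ordinary triangle inequality'' you invoke is not enough. A concrete failure: take nodes $u,a,b,v$ with edges $u\to a\to v$ and $u\to b\to v$. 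Both are shortest paths of length~$2$. If your canonical path is through $a$, then $\delta_{u,v}^b(G)=0$ while $r_{u,b}(G)=r_{b,v}(G)=1$, and the upper-bound constraint for $w=b$ becomes $2\le 1+1-1=1$, which is violated. Hence the assignment you describe is \emph{not} feasible for Eq.~\eqref{eq:final_MIP}.

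The fix is to use the interpretation the paper intends (and uses explicitly in the proof of Theorem~\ref{thm:uniqueness}): $\delta_{u,v}^w(G)=1$ iff $w$ lies on \emph{some} shortest $u\to v$ path, equivalently $d_{u,w}(G)+d_{w,v}(G)=d_{u,v}(G)$. With this definition, $\delta_{u,v}^w(G)=0$ together with $r_{u,w}(G)=r_{w,v}(G)=1$ forces $d_{u,w}(G)+d_{w,v}(G)>d_{u,v}(G)$, and since all distances are integers this gives exactly the required $d_{u,v}(G)\le d_{u,w}(G)+d_{w,v}(G)-1$. Conditions~$(\mathcal{C}6)$ and $(\mathcal{C}7)$ are unaffected by the switch (for $(\mathcal{C}7)$, when $A_{u,v}(G)=1$ no $w\neq u,v$ can satisfy $d_{u,w}+d_{w,v}=1$, so the sum is still~$2$). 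Apart from this point your walkthrough is fine and matches the paper's ``check each condition'' approach; the paper simply asserts the verification in one line, whereas you spell it out.
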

\begin{proof}
    By definition, it is easy to check that $\{A_{u,v}(G),r_{u,v}(G),d_{u,v}(G),\delta_{u,v}^w(G)\}_{u,v,w\in [n]}$ satisfies condition ($\mathcal C1$) -- ($\mathcal C8$). 
\end{proof}

The opposite is non-trivial to prove, that is, any feasible solution of Eq.~\eqref{eq:final_MIP} corresponds to an unique graph with $\sum_{v\in [n]}A_{v,v}$ nodes, which is guaranteed by Theorem \ref{thm:uniqueness}.

\begin{proof}[Proof of Theorem \ref{thm:uniqueness}]
    Denote $\mathcal F_{n_0,n}$ as the feasible  domain restricted by Eq.~\eqref{eq:final_MIP} with size $[n_0,n]$, and $\mathcal G_{n_0,n}$ as the whole graph space with node numbers in $[n_0,n]$. Define the following mapping:
    \begin{equation*}
        \begin{aligned}
            \mathcal M_{n_0,n}:\mathcal F_{n_0,n}&\to\mathcal G_{n_0,n}\\
            \{A_{u,v},r_{u,v},d_{u,v},\delta_{u,v}^w\}_{u,v,w\in [n]}&\mapsto \{A_{u,v}\}_{u,v\in [n]}
        \end{aligned}
    \end{equation*}
    For simplicity, we still use a $n\times n$ adjacency matrix to define a graph with node number less than $n$ and use $A_{v,v}(G)$ to represent the existence of node $v$. Also, the subscriptions, e.g., $\{\}_{u,v,w\in [n]}$, are omitted from now on. 
    
    If $n_1=\sum_{v\in [n]}A_{v,v}<n$, Condition $(\mathcal C1)$ forces that:
    \begin{equation*}
        \begin{aligned}
            A_{v,v}=
            \begin{cases}
            1,&v\in [n_1]\\
            0,&v\in [n]\backslash [n_0]
            \end{cases}
        \end{aligned}
    \end{equation*}
    For any pair of $(u,v)$ with $u\neq v$ and $\max(u,v)\ge n_1$, Conditions $(\mathcal C2)$ and $(\mathcal C7)$ uniquely define $\{r_{u,v},d_{u,v},\delta_{u,v}^w\}$ as:
    \begin{equation*}
        \begin{aligned}
            r_{u,v}=0,~d_{u,v}=n,~\delta_{u,v}^w=\begin{cases}
                1,&w\in  \{u,v\}\\
                0,&w\not\in \{u,v\}
            \end{cases}
        \end{aligned}
    \end{equation*}
    Therefore, it is equivalent to show that $\mathcal M_{n,n}$ is a bijection. Since Lemma \ref{lm:existence} already shows that $\mathcal M_{n,n}$ is a surjection, it suffices to prove that $\mathcal M_{n,n}$ is an injection. Precisely, for any feasible solution $\{A_{u,v},r_{u,v},d_{u,v},\delta_{u,v}^w\}$, there exists a graph $G$ with adjacency matrix given by $\{A_{u,v}(G)\}=\{A_{u,v}\}$, such that:
    \begin{equation}\label{eq:bijection}\tag{$\star$}
        \begin{aligned}
            \{r_{u,v}(G),d_{u,v}(G),\delta_{u,v}^w(G)\}=\{r_{u,v},d_{u,v},\delta_{u,v}^w\}
        \end{aligned}
    \end{equation}
    Since $r_{v,v}(G), d_{v,v}(G),\delta_{v,v}^w(G),\delta_{u,v}^u(G),\delta_{u,v}^v(G)$ are defined for completeness of our definition, whose variable counterparts are properly and uniquely defined in Condition $(\mathcal C3)$ and the first part of Condition $(\mathcal C7)$, we only need to consider all triples $(u,v,w)$ with $u\neq v\neq w$, which will not be specified later for simplicity.

    Now we are going to prove Eq.~\eqref{eq:bijection} holds by induction on $\min(d_{u,v}(G),d_{u,v})<n$.

    When $\min(d_{u,v}(G),d_{u,v})=1$, for any pair of $(u,v)$, we have:
    \begin{equation*}
        \begin{aligned}
            d_{u,v}(G)=1 &\Rightarrow A_{u,v}(G)=1,~r_{u,v}(G)=1,~\delta_{u,v}^w(G)=0 &&\longleftarrow \text{definition}\\
            &\Rightarrow A_{u,v}=1 &&\longleftarrow \text{definition of $\mathcal M_{n,n}$}\\
            &\Rightarrow r_{u,v}=1,~d_{u,v}=1,~\delta_{u,v}^w=0 &&\longleftarrow \text{Conditions $(\mathcal C4) + (\mathcal C7)$}
        \end{aligned}
    \end{equation*}
    and:
    \begin{equation*}
        \begin{aligned}
            d_{u,v}=1 &\Rightarrow A_{u,v}=1,~r_{u,v}=1,~\delta_{u,v}^w=0 &&\longleftarrow \text{Conditions $(\mathcal C4) + (\mathcal C7)$}\\
            &\Rightarrow A_{u,v}(G)=1 &&\longleftarrow \text{definition of $\mathcal M_{n,n}$}\\
            &\Rightarrow r_{u,v}(G)=1,~d_{u,v}(G)=1,~\delta_{u,v}^w(G)=0 &&\longleftarrow \text{definition}
        \end{aligned}
    \end{equation*}
    For both cases, we have Eq.~\eqref{eq:bijection} holds.

    Assume that Eq.~\eqref{eq:bijection} holds for any pair of $(u,v)$ with $\min(d_{u,v}(G),d_{u,v})\le sd$ with $sd<n-1$. Consider the following two cases for $\min(d_{u,v}(G),d_{u,v})=sd+1<n$.

    \textbf{Case I:} If $d_{u,v}(G)=sd+1$, we know that $r_{u,v}(G)=1$ since the shortest distance from node $u$ to node $v$ is finite. For any $w\not\in\{u,v\}$ such that $\delta_{u,v}^w(G)=1$, we have:
    \begin{equation*}
        \begin{aligned}
            \delta_{u,v}^w(G)=1 &\Rightarrow d_{u,w}(G)+d_{w,v}(G)=d_{u,v}(G)&&\longleftarrow\text{definition of $\delta_{u,v}^w(G)$} \\
             &\Rightarrow \max(d_{u,w}(G),d_{w,v}(G))\le sd &&\longleftarrow \text{$d_{u,w}(G)>0, d_{w,v}(G)>0$}\\
             &\Rightarrow d_{u,w}=d_{u,w}(G),~d_{w,v}=d_{w,v}(G) &&\longleftarrow\text{assumption of induction}\\
             & \Rightarrow r_{u,w}=r_{w,v}=1&&\longleftarrow\text{Condition $(\mathcal C5)$} \\
             &\Rightarrow d_{u,v}\le d_{u,w}+d_{w,v}=sd+1&&\longleftarrow\text{Condition $(\mathcal C8)$}\\
             &\Rightarrow d_{u,v}=sd+1 &&\longleftarrow\text{$d_{u,v}\ge sd+1$}\\
             &\Rightarrow r_{u,v}=1,~\delta_{u,v}^w=1&&\longleftarrow\text{Conditions $(\mathcal C5)+(\mathcal C8)$}
        \end{aligned}
    \end{equation*}
    which means that $r_{u,v}=r_{u,v}(G),~d_{u,v}=d_{u,v}(G),~\delta_{u,v}^w=\delta_{u,v}^w(G)$ with $\delta_{u,v}^w(G)=1$.
    
    For any $w\not\in\{u,v\}$ such that $\delta_{u,v}^w(G)=0$. If $\delta_{u,v}^w=1$, then we have:
    \begin{equation*}
        \begin{aligned}
            \delta_{u,v}^w=1&\Rightarrow r_{u,w}=r_{w,v}=1,~d_{u,v}=d_{u,w}+d_{w,v}&&\longleftarrow\text{Conditions $(\mathcal C6)+(\mathcal C8)$}\\
            &\Rightarrow \max(d_{u,w},d_{w,v})\le sd &&\longleftarrow\text{$d_{u,w}>0,~d_{w,v}>0$}\\
            &\Rightarrow d_{u,w}(G)=d_{u,w},~d_{w,v}(G)=d_{w,v}&&\longleftarrow\text{assumption of induction}\\
            &\Rightarrow d_{u,w}(G)+d_{w,v}(G)=sd+1=d_{u,v}(G)&&\longleftarrow\text{$d_{u,v}(G)=d_{u,v}=sd+1$}\\
            &\Rightarrow\delta_{u,v}^w(G)=1&&\longleftarrow\text{definition of $\delta_{u,v}^w(G)$}
        \end{aligned}
    \end{equation*}
    which contradicts to $\delta_{u,v}^w(G)=0$. Thus $\delta_{u,v}^w=0=\delta_{u,v}^w(G)$ with $\delta_{u,v}^w(G)=0$.

    \textbf{Case II:} If $d_{u,v}=sd+1$, from $sd+1>1$ and Condition $(\mathcal C4)$ we know that $A_{u,v}=0$, from Condition $(\mathcal C5)$ we have $r_{u,v}=1$, and then from Condition $(\mathcal C7)$ we obtain that $\sum_{w\in [n]}\delta_{u,v}^w>2$. 

    For any $w\not\in\{u,v\}$ such that $\delta_{u,v}^w=1$, we have:
    \begin{equation*}
        \begin{aligned}
            \delta_{u,v}^w=1 &\Rightarrow d_{u,w}+d_{w,v}=d_{u,v}=sd+1&&\longleftarrow\text{Condition $(\mathcal C8)$} \\
             &\Rightarrow \max(d_{u,w},d_{w,v})\le sd &&\longleftarrow \text{$d_{u,w}>0, d_{w,v}>0$}\\
             &\Rightarrow d_{u,w}(G)=d_{u,w},~d_{w,v}(G)=d_{w,v} &&\longleftarrow\text{assumption of induction}\\
             &\Rightarrow d_{u,v}(G)\le d_{u,w}(G)+d_{w,v}(G)=sd+1&&\longleftarrow\text{definition of $d_{u,v}(G)$}\\
             &\Rightarrow d_{u,v}(G)=sd+1 &&\longleftarrow\text{$d_{u,v}(G)\ge sd+1$}\\
             &\Rightarrow r_{u,v}(G)=1,~\delta_{u,v}^w(G)=1&&\longleftarrow\text{definition}
        \end{aligned}
    \end{equation*}
    which means that $r_{u,v}(G)=r_{u,v},~d_{u,v}(G)=d_{u,v},~\delta_{u,v}^w(G)=\delta_{u,v}^w$ with $\delta_{u,v}^w=1$.

    For any $w\not\in\{u,v\}$ such that $\delta_{u,v}^w=0$. If $\delta_{u,v}^w(G)=1$, then we have:
    \begin{equation*}
        \begin{aligned}
            \delta_{u,v}^w(G)=1&\Rightarrow d_{u,v}(G)=d_{u,w}(G)+d_{w,v}(G)&&\longleftarrow\text{definition of $\delta_{u,v}^w(G)$}\\
            &\Rightarrow \max(d_{u,w}(G),d_{w,v}(G))\le sd &&\longleftarrow\text{$d_{u,w}(G)>0,~d_{w,v}(G)>0$}\\
            &\Rightarrow d_{u,w}=d_{u,w}(G),~d_{w,v}=d_{w,v}(G)&&\longleftarrow\text{assumption of induction}\\
            &\Rightarrow d_{u,w}+d_{w,v}=sd+1=d_{u,v}&&\longleftarrow\text{$d_{u,v}(G)=d_{u,v}=sd+1$}\\
            &\Rightarrow\delta_{u,v}^w=1&&\longleftarrow\text{Condition $(\mathcal C8)$}
        \end{aligned}
    \end{equation*}
    which contradicts to $\delta_{u,v}^w=0$. Thus $\delta_{u,v}^w(G)=0=\delta_{u,v}^w$ with $\delta_{u,v}^w=0$.

    The remaining case is $d_{u,v}(G)=d_{u,v}=n$, i.e., node $u$ cannot reach node $v$. It is straightforward to verify that:
    \begin{equation*}
        \begin{aligned}
            r_{u,v}&=0=r_{u,v}(G)&&\longleftarrow\text{Condition $(\mathcal C5)$, definition of $r_{u,v}(G)$}\\
            \delta_{u,v}^w&=0=\delta_{u,v}^w(G)&&\longleftarrow\text{Condition $(\mathcal C7)$, definition of $\delta_{u,v}^w(G)$}
        \end{aligned}
    \end{equation*}
    
    Therefore, Eq.~\eqref{eq:bijection} always holds, which completes the proof.
\end{proof}

\section{Kernel encoding}\label{app:kernel_encoding}
This section presents encoding for shortest-graph kernels and binary node labels proposed in \citep{xie2025bogrape}. Notations are slightly changed to keep consistency with this paper.

\textbf{Graph kernel encoding} Introduce indicator variables $p^{u,v}_{s,l_1,l_2}=\mathbf 1(F_{u,l_1}=1,~d_{u,v}=s,~F_{v,l_2}=1)$ and count the number of each type of paths as:
\begin{equation*}
    \begin{aligned}
        P_{s,l_1,l_2}(G^i)=\sum\limits_{u,v\in [N]}p^{u,v}_{s,l_1,l_2}.
    \end{aligned}
\end{equation*}
Then SP kernel \eqref{eq:SP_kernel} could be formulated as:
\begin{equation*}
    \begin{aligned}
         k_g(G,G^i)&=\frac{1}{n^2n_i^2}\sum\limits_{s\in [n],l_1,l_2\in[L_n]}P_{s,l_1,l_2}(G^i)\cdot P_{s,l_1,l_2},\\
          k_g(G,G)&=\frac{1}{n^4}\sum\limits_{s\in [n],l_1,l_2\in [L_n]}P_{s,l_1,l_2}^2.
    \end{aligned}
\end{equation*}
To handle the quadratic term $P_{s,l_1,l_2}^2$, we further introduce indicator variables $P_{s,l_1,l_2}^{c}=\mathbf 1(P_{s,l_1,l_2}=c)$, and rewrite $k_g(G,G)$ as the following linear form:
\begin{equation*}
    \begin{aligned}
        k_g(G,G)=\frac{1}{n^4}\sum\limits_{s\in [n],l_1,l_2\in [L_n],c\in [n^2+1]}c^2\cdot P_{s,l_1,l_2}^c.
    \end{aligned}
\end{equation*}
Before formulating indicators $p_{s,l_1,l_2}^{u,v}$, we need indicators $d_{u,v}^s=\mathbf 1(d_{u,v}=s)$ that satisfy:
\begin{equation*}
    \sum\limits_{s\in [n+1]}d_{u,v}^s=1,~
    \sum\limits_{s\in [n+1]}s\cdot d_{u,v}^s=d_{u,v},~\forall u,v\in[n],
\end{equation*}
using which we can formulate $p_{s,l_1,l_2}^{u,v},~\forall u,v,s\in [n],~l_1,l_2\in [L_n]$ as:
\begin{equation*}
    \begin{aligned}
        3\cdot p_{s,l_1,l_2}^{u,v}\le F_{u,l_1}+d_{u,v}^s+F_{v,l_2},~p_{s,l_1,l_2}^{u,v}\ge F_{u,l_1}+d_{u,v}^s+F_{v,l_2}-2.
    \end{aligned}
\end{equation*}
Similar to $d_{u,v}^s$, indicators $P_{s,l_1,l_2}^c$ can be expressed as:
\begin{equation*}
    \begin{aligned}
        \sum\limits_{c\in [n^2+1]}P_{s,l_1,l_2}^c=1,~\sum\limits_{c\in [n^2+1]}c\cdot P_{s,l_1,l_2}^c=P_{s,l_1,l_2},~
        \forall s\in [n],~l_1,l_2\in [L_n].
    \end{aligned}
\end{equation*}

\textbf{Node label encoding} $k_n$ could be defined in multiple ways, \citet{xie2025bogrape} propose the following permutational-invariant kernel measuring the pair-wise similarity among node features:
\begin{equation*}
    \begin{aligned}
        k_n(F_n^1,F_n^2):=\frac{1}{n_1n_2L_n}\sum\limits_{v_1\in [n_1],v_2\in [n_2]}F_{v_1}^1\cdot F_{v_2}^2=\frac{1}{n_1n_2L_n}\sum\limits_{l\in [L_n]}N_l(F_n^1)\cdot N_1(F_n^2),
    \end{aligned}
\end{equation*}
where $N_l=\sum\limits_{v\in [n]}F_{v,l},~\forall l\in [L_n]$, and $n_1n_2L_n$ is the normalized coefficient.

Similar to the graph kernel encoding, we have:
\begin{equation*}
    \begin{aligned}
        k_n(F_n,F_n^i)&=\frac{1}{nn_iL_n}\sum\limits_{l\in [L_n]}N_l(F_n^i)\cdot N_l,\\
        k_n(F_n,F_n)&=\frac{1}{n^2L_n}\sum\limits_{l\in [L_n]}N_l^2=\frac{1}{n^2L_n}\sum\limits_{l\in [L_n],c\in [n+1]}c^2\cdot N_l^c,
    \end{aligned}
\end{equation*}
where indicators $N_l^c=\mathbf 1(N_l=c)$ satisfy:
\begin{equation*}
    \begin{aligned}
        \sum\limits_{c\in [n+1]}N_l^c=1,~\sum\limits_{c\in [n+1]}c\cdot N_l^c=N_l,~\forall l\in [L_n].
    \end{aligned}
\end{equation*}

\section{Experimental details and full results}\label{app:full_experiments}

\subsection{Hyperparameter settings in GP and BO}\label{app:param_setting}
We implement our graph kernels defined in Eqs. \eqref{eq:linear_kernel} and \eqref{eq:exponential_kernel} as an inherited \texttt{Kernel} class in GPflow \citep{matthews2017GPflow}. The initial values of the trainable kernel parameters $\alpha, \beta, \gamma$ and $\sigma_k^2$ are set to $1$ with bounds $[0.01, 100]$. In BO, we apply a batch setting to return $5$ architectures with the lowest LCB values in each iteration by setting Gurobi parameter \texttt{PoolSearchMode}=2. The final MIP model Eq. \eqref{eq:final_MIP} is designed for fixed graph size, but NAS-Bench-101 dataset consists of graph sizes ranging from $2$ to $7$. Our graph encoding supports changeable sizes. The only issue is that the normalized coefficients in kernel encoding are no longer constant, which complicates our formulation. One can resolve this issue by replacing these coefficients by constants or ignoring them. In NAS, however, architectures with more nodes usually have better performance. For instance, most high-quality architectures in NAS-Bench-101 have either 6 or 7 nodes. Therefore, in our experiments for NAS-Bench-101, we simply solve two MIP models with graph size set to $N=6, 7$ sequentially. Each model returns $5$ architectures, we still select 5 of 10 with the lowest LCB values. To encourage exploration, we set $\beta_t^{1/2}=3$ in LCB. The \texttt{TimeLimit} parameter in Gurobi for solving each MIP is set as 1800s. 

\subsection{Details on baselines}\label{app:baseline_details}
We provide more details on algorithms used in Section \ref{subsec:baselines}. We adapt the implementation from \citep{white2020study} for all baselines except for NAS-BOWL, where we use the publicly available code from \citep{ru2021interpretable}.

\begin{itemize}
    \item \textbf{Random:} Randomly sample the required number of architectures and evaluate them.
    \item \textbf{DNGO:} Deep Network for Global Optimization (DNGO) uses neural networks to learn an adaptive set of basis functions for Bayesian linear regression instead of GP in BO. It is adapted for NAS by treating the adjacency matrix of graph as encoding vector inputs.
    \item \textbf{BOHAMIANN:}  Bayesian Optimization with Hamiltonian Monte Carlo Artificial Neural
    Networks (BOHAMIANN) uses Bayesian neural networks as the surrogate model in both single- and multi-task BO, and achieves scalability through stochastic gradient Hamiltonian Monte Carlo. It is not originally designed for NAS but could be adapted by encoding graph input by adjacency matrix.
    \item \textbf{NASBOT:} Neural Architecture Search with Bayesian Optimisation and Optimal Transport (NASBOT) is a GP-based BO framework for NAS. It defines a distance metric to reveal the similarity between graphs called Optimal Transport Metrics for Architectures of Neural Networks (OTMANN). NASBOT specifically provides a list of operations for the evolutionary algorithm used in the acquisition function optimization.
    \item \textbf{Evolution:} Regularized evolution consists of mutating the best architectures from the population until a given budget runs out. \citet{white2020study} set the population size to 30 and outdate the architecture with the worst validation accuracy instead of the oldest one because it results in better performance in NAS tasks following. 
    \item \textbf{GP-BAYESOPT:} Standard BO with GP surrogate and UCB acquisition, implemented using ProBO \citep{neiswanger2019probo}. Similarity (distance) metric between two architectures is defined as the sum of Hamming distances between the adjacency matrices and the associated operations.
    \item \textbf{GCN:} Use Graph Convolutional Networks (GCN) as the neural predictor to predict the performance of random architectures and select the best $K$ samples for evaluation.
    \item \textbf{BONAS:} Bayesian Optimized Neural Architecture Search (BONAS) uses a GCN as surrogate model in BO to select multiple architectures in each iteration, and apply weight-sharing during the model training to accelerate traditional sampling methods.
    \item \textbf{Local search:} The simplest hill-climbing local search method evaluates all architectures in the neighborhood of a given sample. It is verified by \citet{white2021localsearch} that local search is a strong baseline in NAS when the noise in the benchmark datasets is reduced to a minimum.
    \item \textbf{BANANAS:} Bayesian
    optimization with neural architectures for NAS (BANANAS) uses a meta neural network over path encoding of individual architectures to predict the validation accuracies. The trained meta NN is used as the surrogate model in BO.
    \item \textbf{NAS-BOWL:} NAS-BOWL is a BO-based NAS algorithm which uses Weisfeiler Lehman (WL) graph kernel in GP surrogate model and adapts to both random sampling and mutation for optimizing the expected improvement (EI) acquisition function. Their experiment results show better performance of NAS-BOWL when using mutation as the acquisition function solver, hence we choose this setting to compare against. NAS-BOWL is considered as the state-of-the-art NAS algorithm.
\end{itemize}

\begin{figure}[t]
    \centering
    \includegraphics[width=\linewidth]{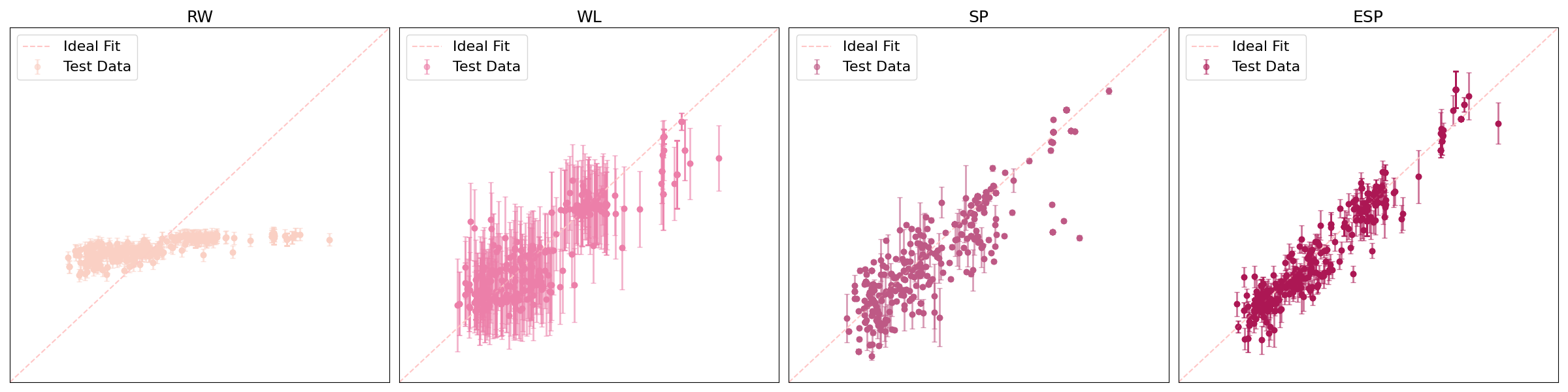}
    \caption{Predictive performance of graph GPs with different kernels. 50 and 400 architectures are randomly sampled from NAS-Bench-101 for training and testing resp. Predicted deterministic validation error are plotted against the true values, with one standard deviation as error bars.}
    \label{fig:N101_kernel_compare}
\end{figure}

\subsection{Additional graph BO for NAS results}\label{app:add_results}
We present additional experiment results on comparing NAS-GOAT with baselines when performing graph BO on NAS-Bench-101 and NAS-Bench-201 benchmarks. Figure \ref{fig:deterministic_appendix} shows the performance of the remaining baselines in deterministic setting, where NAS-GOAT still outperforms others in all cases. Figure \ref{fig:noisy_appendix} summarizes the comparisons between NAS-GOAT and all $11$ baselines in noisy setting. NAS-GOAT demonstrates comparable performance as state-of-the-art baselines, e.g. NAS-BOWL, NASBOT, BONAS. Note that the ``noisy'' setting is not another optimization task with noisy function evaluations, where we need to show NAS-GOAT still achieves the best objective function values. Instead, it is a NAS-specific setting where a good NAS algorithm is expected to find architectures with promising test accuracy even with unstable validation accuracy due to stochasticity in training process.

\begin{figure}
     \centering
     \includegraphics[width=\textwidth]{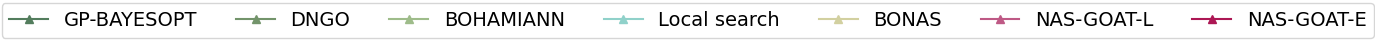}\\
     \vspace{.3mm}
     \begin{subfigure}[b]{0.245\textwidth}
         \centering
         \includegraphics[width=\textwidth]{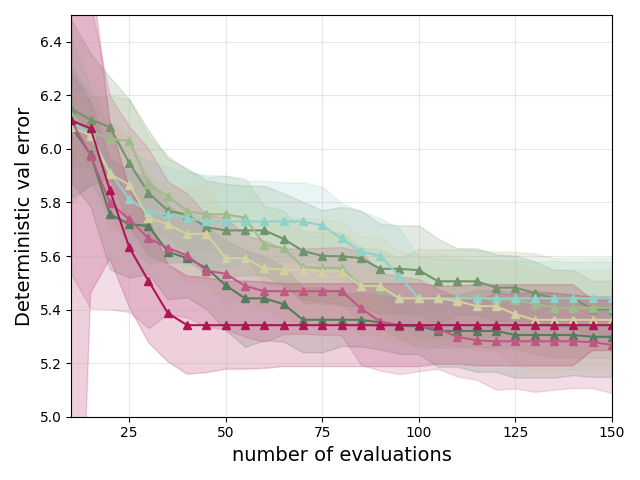}
         \caption{N101 (CIFAR10)}
     \end{subfigure}
     \hfill
    \begin{subfigure}[b]{0.245\textwidth}
         \centering
         \includegraphics[width=\textwidth]{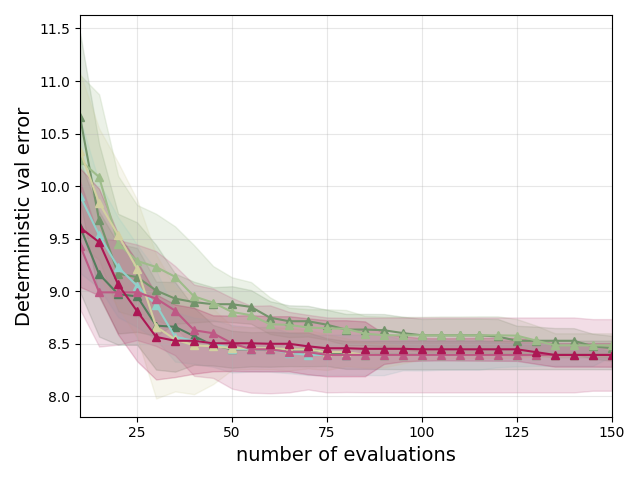}
         \caption{N201 (CIFAR10)}
     \end{subfigure}
     \hfill
     \begin{subfigure}[b]{0.245\textwidth}
         \centering
         \includegraphics[width=\textwidth]{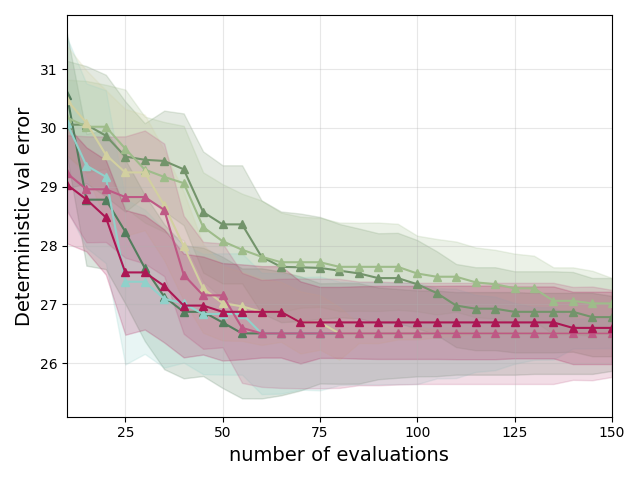}
         \caption{N201 (CIFAR100)}
     \end{subfigure}
    \hfill
     \begin{subfigure}[b]{0.245\textwidth}
         \centering
         \includegraphics[width=\textwidth]{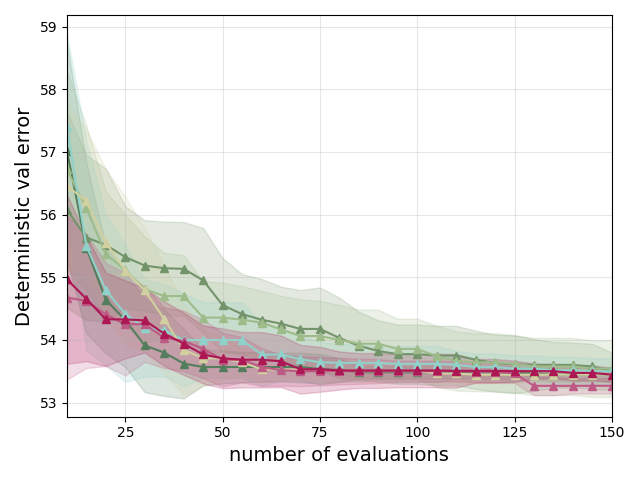}
         \caption{N201 (ImageNet)}
     \end{subfigure}
     \hfill
     \begin{subfigure}[b]{0.245\textwidth}
         \centering
         \includegraphics[width=\textwidth]{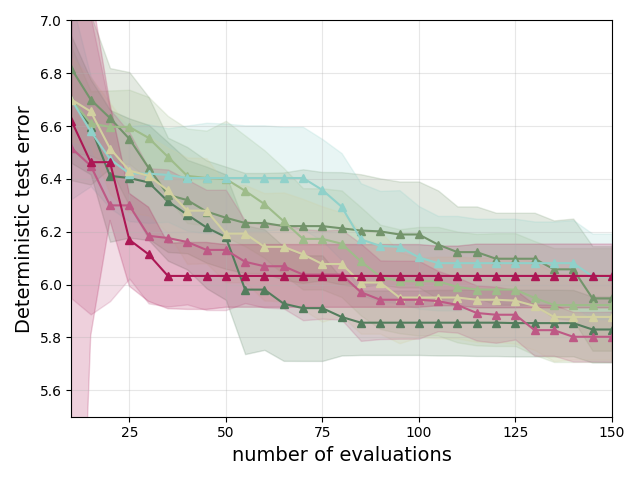}
         \caption{N101 (CIFAR10)}
     \end{subfigure}
     \hfill
     \begin{subfigure}[b]{0.245\textwidth}
         \centering
         \includegraphics[width=\textwidth]{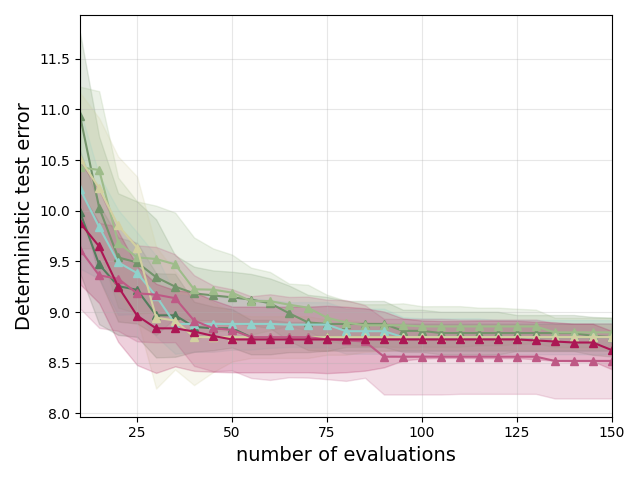}
         \caption{N201 (CIFAR10)}
     \end{subfigure}
     \hfill
     \begin{subfigure}[b]{0.245\textwidth}
         \centering
         \includegraphics[width=\textwidth]{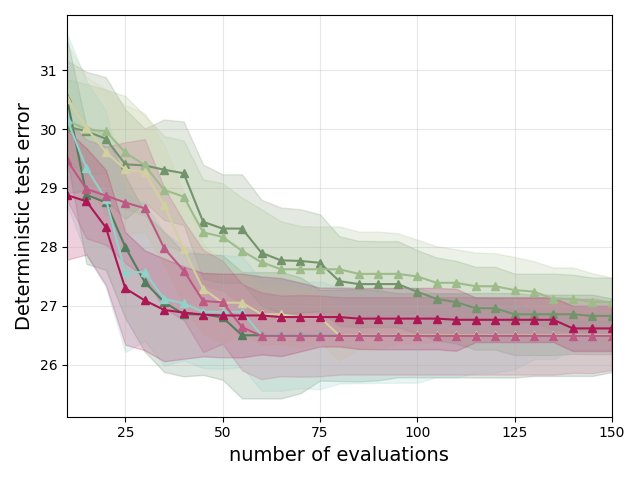}
         \caption{N201 (CIFAR100)}
     \end{subfigure}
     \hfill
     \begin{subfigure}[b]{0.245\textwidth}
         \centering
         \includegraphics[width=\textwidth]{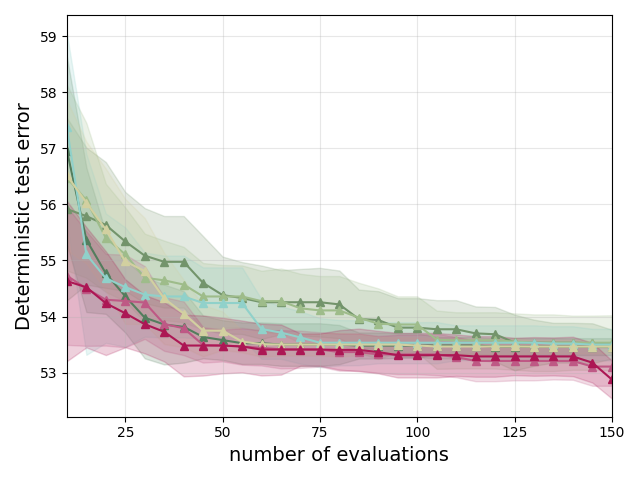}
         \caption{N201 (ImageNet)}
     \end{subfigure}
    \caption{Comparison NAS-GOAT with the remaining baselines. Numerical results of Graph BO on NAS-Bench-101 (N101) and NAS-Bench-201 (N201). (\textbf{Top}) Deterministic validation error. (\textbf{Bottom}) The corresponding test error. Median with one standard deviation over 20 replications is plotted.}
    \label{fig:deterministic_appendix}
\end{figure}

\begin{figure}
     \centering
     \includegraphics[width=\textwidth]{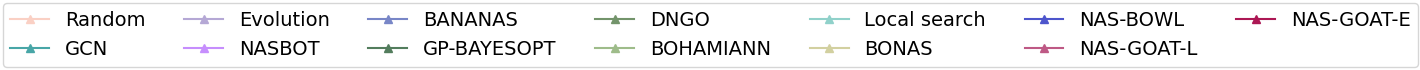}\\
     \vspace{.3mm}
     \begin{subfigure}[b]{0.245\textwidth}
         \centering
         \includegraphics[width=\textwidth]{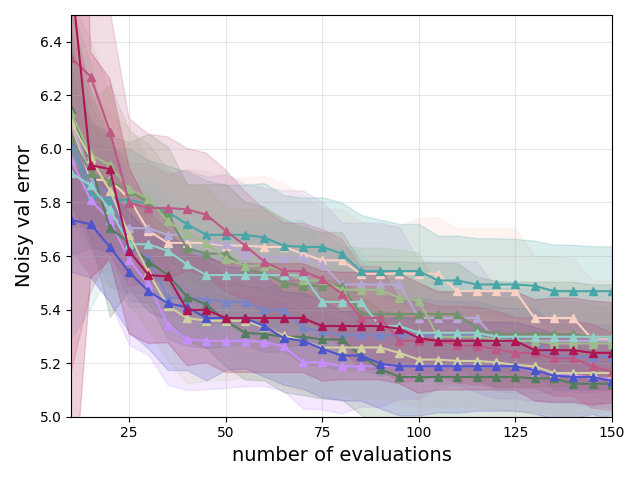}
         \caption{N101 (CIFAR10)}
     \end{subfigure}
     \hfill
    \begin{subfigure}[b]{0.245\textwidth}
         \centering
         \includegraphics[width=\textwidth]{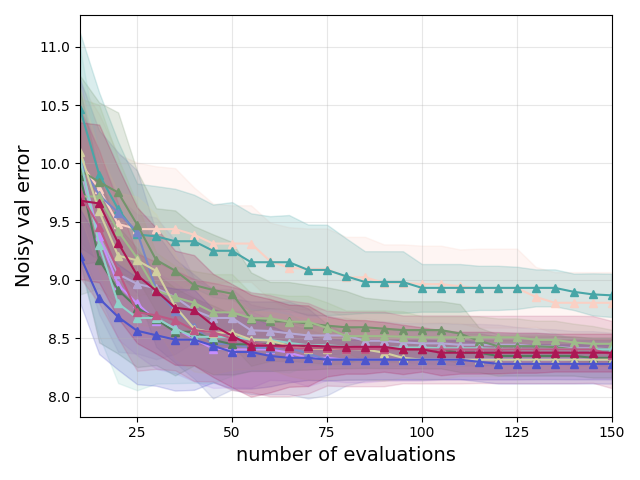}
         \caption{N201 (CIFAR10)}
     \end{subfigure}
     \hfill
     \begin{subfigure}[b]{0.245\textwidth}
         \centering
         \includegraphics[width=\textwidth]{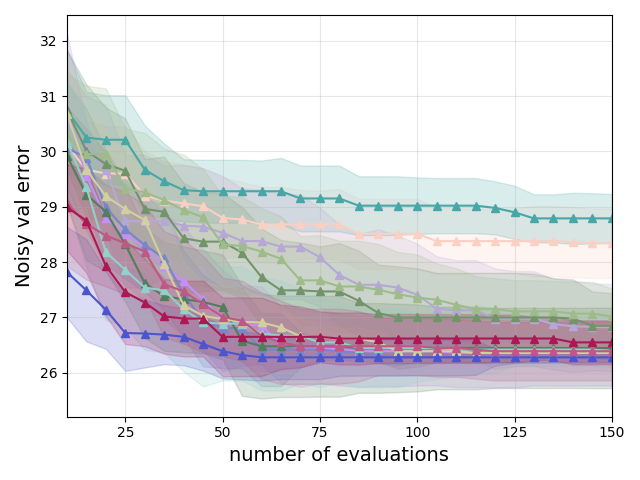}
         \caption{N201 (CIFAR100)}
     \end{subfigure}
    \hfill
     \begin{subfigure}[b]{0.245\textwidth}
         \centering
         \includegraphics[width=\textwidth]{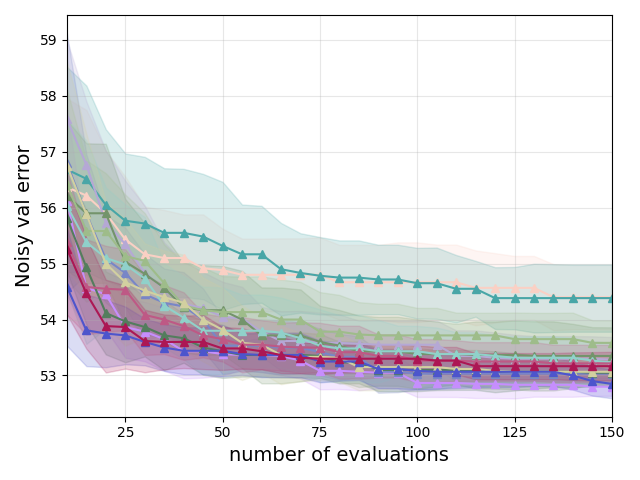}
         \caption{N201 (ImageNet)}
     \end{subfigure}
     \hfill
     \begin{subfigure}[b]{0.245\textwidth}
         \centering
         \includegraphics[width=\textwidth]{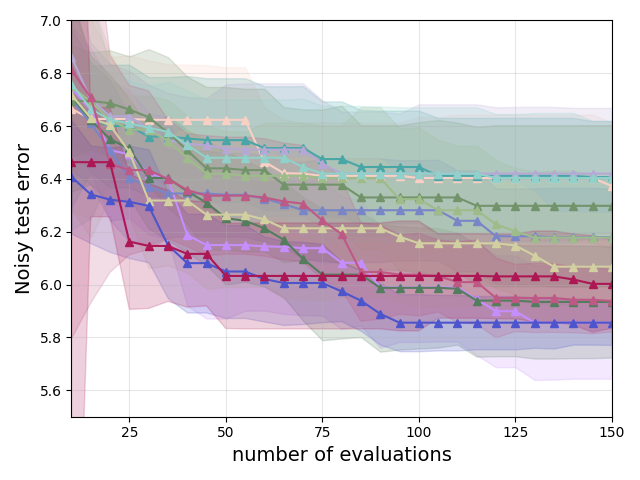}
         \caption{N101 (CIFAR10)}
     \end{subfigure}
     \hfill
     \begin{subfigure}[b]{0.245\textwidth}
         \centering
         \includegraphics[width=\textwidth]{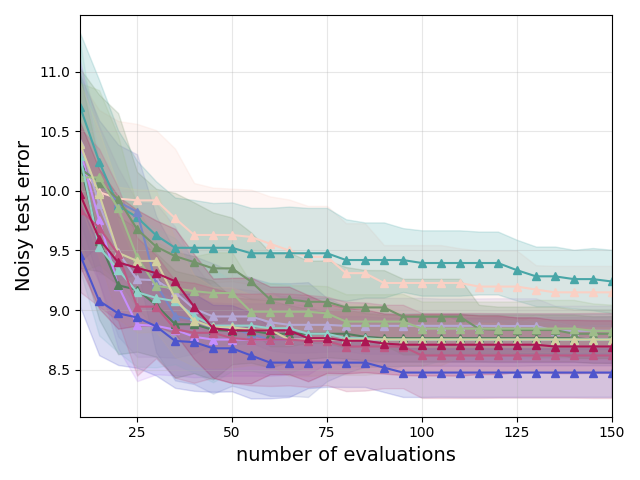}
         \caption{N201 (CIFAR10)}
     \end{subfigure}
     \hfill
     \begin{subfigure}[b]{0.245\textwidth}
         \centering
         \includegraphics[width=\textwidth]{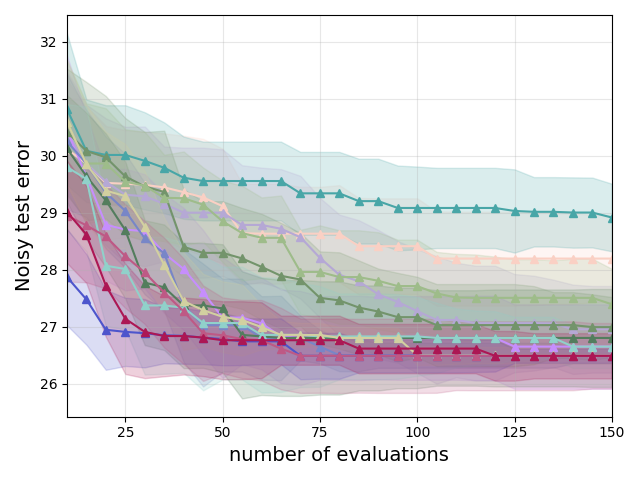}
         \caption{N201 (CIFAR100)}
     \end{subfigure}
     \hfill
     \begin{subfigure}[b]{0.245\textwidth}
         \centering
         \includegraphics[width=\textwidth]{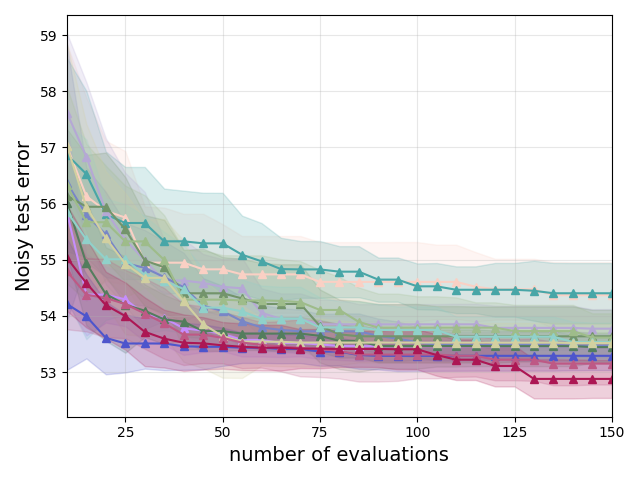}
         \caption{N201 (ImageNet)}
     \end{subfigure}
    \caption{Numerical results of Graph BO on NAS-Bench-101 (N101) and NAS-Bench-201 (N201). (\textbf{Top}) Noisy validation error. (\textbf{Bottom}) The corresponding test error. Median with one standard deviation over 20 replications is plotted.}
    \label{fig:noisy_appendix}
\end{figure}

\end{document}